\let\cref\Cref
\newtheorem{thm}{Theorem}[section]
\newtheorem{cor}[thm]{Corollary}
\newtheorem{lem}[thm]{Lemma}
\newtheorem{fact}[thm]{Fact}
\theoremstyle{definition}
\newtheorem{defn}[thm]{Definition}
\newtheorem{notation}[thm]{Notation}
\newtheorem{strategy}[thm]{Strategy}
\theoremstyle{remark}
\newtheorem{exmpl}[thm]{Example}
\newtheorem{rem}[thm]{Remark}
\crefname{defn}{definition}{definitions}
\Crefname{defn}{Definition}{Definitions}
\crefname{lem}{lemma}{lemmas}
\Crefname{lem}{lemma}{Lemmas}
\newcommand\blfootnote[1]{%
  \begingroup
  \renewcommand\thefootnote{}\footnote{#1}%
  \addtocounter{footnote}{-1}%
  \endgroup
}
\newcommand{\tand}{\text{ and }}
\newcommand{\tif}{\text{ if }}
\newcommand{\totherwise}{\text{ otherwise}}
\newcommand{\ttrue}{\text{true}}
\newcommand{\tfalse}{\text{false}}
\newcommand{\tunknown}{\text{unknown}}
\newcommand{\x}{\times}
\renewcommand{\o}{o}
\newcommand{\q}{?}
\newcommand{\values}{\ensuremath{\{\x,\o,\q\}}}
\renewcommand{\GMI}[1][]{(G,M,\relI_{#1})}
\newcommand{\GMWI}[1][]{(G,M,\values,\relI_{#1})}
\newcommand{\GMWII}[1][]{(G_{#1},M,\values,\relI_{#1})}
\renewcommand{\implies}{\rightarrow}
\newcommand{\Implies}{\Rightarrow}
\newcommand{\formulas}[1]{\ensuremath{\mathit{F}(#1)}}
\newcommand{\powerset}[1]{\ensuremath{\mathcal{P}(#1)}}
\newcommand{\respectingmodels}[1]{\ensuremath{\operatorname{Resp}(#1)}}
\newcommand{\Impm}{\ensuremath{\operatorname{Imp}_M}}
\newcommand{\Imp}[1][\context]{\operatorname{Imp}(#1)}
\newcommand{\Sat}[1][\context]{\operatorname{Sat}(#1)}
\newcommand{\Conss}[3]{\operatorname{Cons}^{#2}_{#3}(#1)}
\newcommand{\Cons}[1]{\Conss{#1}{}{}}
\newcommand{\expert}[1]{(\context_{#1},\Cons{\mathcal{L}_{#1}})}
\renewcommand{\epsilon}{\varepsilon}
\renewcommand{\phi}{\varphi}
\begin{document}

\title{Interactive Collaborative Exploration using Incomplete
  Contexts}

\author{Maximilian Felde \and  Gerd Stumme}

\date{\today}

\institute{%
  Knowledge and Data Engineering Group\\[0.5ex]
  Research Center for Information System Design\\[0.5ex]
  University of Kassel, Germany\\[0.5ex]
  \email{felde@cs.uni-kassel.de, stumme@cs.uni-kassel.de}}
\maketitle

\blfootnote{Authors are given in alphabetical order.  No priority in
  authorship is implied.}

\begin{abstract}

  A well-known knowledge acquisition method in the field of Formal
  Concept Analysis (FCA) is \emph{attribute exploration}.  It is used
  to reveal dependencies in a set of attributes with help of a domain
  expert.  In most applications no single expert is capable (time- and
  knowledge-wise) of exploring the knowledge domain alone.  However,
  there is up to now no theory that models the interaction of multiple experts
  for the task of attribute exploration with incomplete knowledge.
  To this end, we to develop a theoretical framework that
  allows multiple experts to explore domains together.
  We use a representation of incomplete knowledge as
  three-valued contexts.
  We then adapt the corresponding version of attribute exploration to fit
  the setting of multiple experts. We suggest formalizations for key
  components like expert knowledge, interaction and collaboration
  strategy.  In particular, we define an order that allows to compare
  the results of different exploration strategies on the same task
  with respect to their information completeness.  Furthermore we
  discuss other ways of comparing collaboration strategies and suggest
  avenues for future research.
\end{abstract}

\keywords{Formal~Concept~Analysis, Incomplete~Context,
  Collaboration,\\ Attribute~Exploration}

\vspace{3ex}                    %
\section{Introduction}
\label{sec:introduction}
Nowadays information is generated and collected at unimaginable
scales.  Some of it is published online, for example on Wikipedia or
in knowledge bases such as Wikidata or DBpedia.  Collecting
information from a domain is the first step to acquiring knowledge.
Often the next step is to structure the information and extract
conceptual knowledge, a task performed by experts of the domain.  But
even for domains of reasonable size experts normally have incomplete
knowledge and collaboration is necessary to improve the results.

For domains that can be represented as data-tables of objects and
attributes, Formal Concept Analysis (FCA)~\cite{GanterWille1999}
provides the well-known knowledge acquisition method of
\emph{attribute
  exploration}~\cite{ganter1984two,ganter2016conceptual}.  
This method helps an expert to systematically obtain knowledge about the structural
dependencies between attributes.


In this paper, we present a theoretical framework for a cooperative
attribute exploration that allows for a set of experts.
Before diving into technical details, however, we illustrate in the
next three paragraphs by an example the procedure of state-of-the-art
attribute exploration with a single expert.
To this end, imagine that we want to explore dependencies of properties
of the sports disciplines of the Summer Olympics 2020 in Tokyo.
 Some properties of interest could be if a discipline has
individual or team competitions, if the contestants of events are
males, females or mixed, or how many events are held and how often the
discipline was already part of the Olympic
Games\footnote{\label{footnote:Olympic_Games_Sources}The information
  for this example was obtained from \url{https://tokyo2020.org/},
  \url{https://www.olympic.org/tokyo-2020} and
  \url{https://en.wikipedia.org/wiki/Olympic_sports}. The full example
  context can be found in the
  \hyperref[appendix:full_example_context]{appendix}.}.

For the basic version of attribute exploration to work, we need all
properties to be binary attributes.  For this purpose, FCA provides a
technique called \emph{scaling}, cf. \cite[Sec. 1.3
ff.]{GanterWille1999}, to transform each non-binary property to
multiple binary properties.  Here we use the three properties `has at
least 5, 10 or 20 events' for the number of events of a discipline and
the three properties `was part of at least 8, 16 or 24 Olympic Games'
for the number of Olympic Games 
that a discipline was already part of, both of which are examples of
scaling using an ordinal scale.

The basic attribute exploration algorithm developed by
Ganter~\cite{ganter1984two} works as follows: The algorithm
systematically asks questions such as \emph{`Do all disciplines have events
for males and for females?'} and \emph{`Have disciplines that hold at least
ten events been part of at least eight Olympic Games?'}.  
These questions are answered by the expert who conducts the
exploration.
More precisely, the expert confirms a question if it is true or
rejects it with a counterexample if it is false.
Here, the expert would reject the
first question since `Artistic Swimming' is an Olympic discipline that
only has events for females.
The expert would further report all properties that `Artistic
Swimming' has. 
More specifically, that it has no male events, no mixed events and no
individual events but does have team events, and that the total number of events is less
than five and that the discipline was part of at least eight but no
more than fifteen Olympic Games.
The expert would confirm the second question to be a valid implication.
The attribute exploration systematically asks such questions
until all possible questions can either be inferred from the set of
accepted questions or rejected based on the counterexamples given by the
expert.

Burmeister and Holzer have developed an extension of attribute
exploration where the expert may have incomplete knowledge of the
domain,
cf. \cite{burmeister1991merkmalimplikationen,conf/iccs/BurmeisterH00,holzer2001dissertation,holzer2004knowledgeP1,holzer2004knowledgeP2}.
In this setting, the expert can also answer questions with ``I don't
know''.  Furthermore, the expert does not need to know the relations of an
object to all attributes when providing a counterexample. This
extension makes attribute exploration more viable in practice. In most
applications, however, no single expert is capable (time- and knowledge-wise) of
exploring a domain on her own. Yet, to the best of our knowledge
there 
presently exists no theory that allows multiple experts with
incomplete knowledge to cooperatively perform attribute exploration.

The purpose of this paper is to 
provide such a theoretical framework, i.e., develop 
the attribute exploration with incomplete knowledge to work with
multiple experts.  To this end, we suggest a possible formalization of
attribute exploration in a collaborative setting with experts that
have incomplete knowledge.  
As a first step, we formalize expert knowledge.
Then, we introduce an order relation that allows us to compare and
combine expert knowledge.
We develop formalizations for expert interaction and
collaboration strategy.
Furthermore, we define an order that enables the comparison of
  results of different exploration strategies on the same task
  with respect to their information completeness.
In order to discuss methods to evaluate and
compare collaboration strategies we use some examples of collaboration
strategies.  These examples also reveal some possible improvements to
our approach.

We mainly build upon the results of Burmeister and Holzer about
representing incomplete knowledge and the corresponding version of
attribute exploration for a single expert.
Note that, in this paper, we neither consider imprecise
knowledge, e.g., the case where an expert is $80\%$ sure that an
object has an attribute or $90\%$ sure an implication is valid, nor
contradictory knowledge, e.g., the case where experts disagree whether
an object has attribute.  


We begin by giving a review of related work in
\cref{sec:related_work}.  Then we recapitulate the existing theory needed
to formulate attribute exploration for a single expert with incomplete
knowledge in \cref{sec:recollection_of_definitions_and_results}.
Afterwards we develop our theory to handle multiple experts with
incomplete knowledge in a collaborative setting in
\cref{sec:experts_and_collaboration}.  Examples for collaboration
strategies are given and used to discuss methods to compare different
strategies.
In \cref{sec:conclusion_and_outlook} we give a conclusion and
recollect some avenues for future research.
Lastly, this paper has a running example where we provide the
\emph{Sports Disciplines of the Summer Olympics 2020} context and an
extensive example of three experts performing a collaborative
exploration with incomplete knowledge.
For better readability, this example is presented in one piece in
an appendix (\cref{sec:appendix}), but is cross-referenced throughout
the whole paper wherever appropriate.

\section{Related Work}
\label{sec:related_work}
Much work has already been done concerning the modelling of
uncertainty and incomplete knowledge in particular, e.g., Bayesian
statistics, modal logics, possibility theory and probabilistic logics.
Of particular interest for us are the three-valued logics
Kleene-Algebras~\cite{Kleene1968-KLEITM-3} and
Kripke-Semantics~\cite{kripke_doi:10.1002/malq.19630090502} as they
have been used to model incomplete knowledge in
FCA~\cite{burmeister1991merkmalimplikationen,Pagliani1997}.

In the realm of FCA, attribute exploration for incomplete knowledge
has been around for about 30 years.  The first attempts to model
incomplete knowledge in the context of formal concept analysis were
made by Burmeister using Kleene-logic in
\cite{burmeister1991merkmalimplikationen} where he already discussed
attribute exploration with incomplete knowledge and strategies to deal
with questions that can not be answered directly.

In later works different approaches were explored.  For example, in
\cite{ganter1999attribute} Ganter models incomplete knowledge using
two formal contexts: One for attributes an object certainly has and
one for attributes an object possibly has.  Another example is
\cite{DBLP:conf/iccs/Obiedkov02} where Obiedkov discussed the
evaluation of propositional formulas in incomplete contexts using a
three-valued modal logic with a `nonsense' value.  In 2016 the book
`Conceptual Exploration'~\cite{ganter2016conceptual} written by Ganter
and Obiedkov was published giving an extensive overview on the many
variations of attribute
exploration.

The paper `On the Treatment of Incomplete Knowledge in Formal Concept
Analysis' \cite{conf/iccs/BurmeisterH00} by Burmeister and Holzer
gives a good overview on how to treat incomplete knowledge in
attribute exploration.  It covers a wide range of topics from an
introduction of incomplete contexts, the definitions of possible and
certain intents and extents and attribute implications in incomplete
contexts to reductions of question-marks, three-valued Kleene-logic
and an algorithm for attribute exploration that allows questions to be
answered with `true', `false' or `unknown'. However,
attribute exploration remains focused on a single expert.

The thesis of Holzer \cite{holzer2001dissertation} and the later
adaptions as papers \cite{holzer2004knowledgeP1,holzer2004knowledgeP2}
contain in-depth results about incomplete contexts, their relationship
to attribute implications and a version of attribute exploration that
allows the expert to answer questions with ``I don't know''. 
As before, attribute exploration remains focused on a single expert.

The publications of Burmeister and Holzer can be considered the most
elaborate for dealing with incomplete knowledge in the realm of FCA.
Hence, we use their results as a foundation for our work on
collaborative exploration with incomplete knowledge.


There exist some publications that deal with ideas of collaboration in the
realm of FCA but not nearly as many as deal with incomplete knowledge.
%
%
A publication specifically addressing collaborative attribute exploration
to help with ontology construction is
\cite{obiedkov2015collaborative} by Obiedkov and Romashkin. Here, some issues arising with
collaborative exploration are identified, e.g., the need to allow for
incomplete examples and for having policies of collaboration that can
deal with conflicting information. However, these issues are merely stated
and not examined in detail. Consequently the task of
further improving the theoretic foundations of attribute exploration
in a collaborative setting is raised.

Recently Hanika and Zumbrägel suggested an approach for collaborative
exploration based on experts for attribute sets
\cite{conf/iccs/HanikaZ18}.  They employ the notion of a consortium of
experts and discuss its ability, i.e., how much of the domain can be
explored given certain experts for attribute sets, and the value of
being able to combine examples.  However, the experts do not directly
talk about objects in the domain (which makes it impossible to merge
partial knowledge about the same object) and are not allowed to answer
``I dont't know''.

Another obstacle of an efficient interactive collaborative attribute
exploration is the sequentiality of asking questions when utilizing
the NextClosure algorithm,
cf. Ganter \cite{ganter1984two,GanterWille1999,ganter2016conceptual}, to generate
questions.  In \cite{conf/iccs/Kriegel16}, Kriegel
modified the NextClosure algorithm to obtain a parallel version of attribute exploration with
all-knowing experts.
This allows multiple questions to be generated at once
and might help to further improve the efficiency of collaborative attribute
exploration with incomplete knowledge.

%
\section{Recollection of known Results}
\label{sec:recollection_of_definitions_and_results}
In this section, we recollect some basic definitions from FCA as
introduced in \cite{Wille82} and recapitulated in
\cite{GanterWille1999} and recollect notions and results from
\cite{holzer2001dissertation,conf/iccs/BurmeisterH00,holzer2004knowledgeP1}
for incomplete contexts and attribute exploration for incomplete
knowledge.
We add some notation to make things more readable in the following
sections and give a few examples to ease understanding of the core
ideas.

\subsection{Formal Context}
Formal contexts are one of the most basic structures in FCA.  Note
that we consider the incidence relation as a function to better fit our
needs later on.
\begin{defn}[formal context, c.f. \cite{Wille82}]
  A (one-valued) \emph{formal context} $\context = \GMI$ consists of a
  set of objects $G$, a set of attributes $M$ and an incidence
  relation $I\subseteq G\times M$ with $(g,m)\in I$ meaning \emph{the
    object $g$ has the attribute
    $m$}.
\end{defn}
There are several interpretations for $(g,m)\not\in I$,
cf. \cite{burmeister1991merkmalimplikationen,conf/iccs/BurmeisterH00},
the standard one being, ``\emph{$g$ does not have the attribute $m$ or it is
  irrelevant, whether $g$ has $m$}''.  In the following we interpret
$(g,m)\not\in I$ as ``\emph{$g$ does not have $m$}'', which is reasonable
when modeling incomplete knowledge.
    
This interpretation can be equivalently modeled by a (two-valued)
\emph{formal context} $\context = \GMI$ that consists of a
set of objects $G$, a set of attributes $M$ and an \emph{incidence function}
$I:G\times M \to \{\x,\o\}$.  The incidence function describes whether
an object $g$ has an attribute $m$:  $I(g,m)=\x$ means ``\emph{$g$ has
  $m$}'' and $I(g,m)=\o$ means ``\emph{$g$ does not have $m$}''.  Clearly
we can use a one-valued formal context to define an equivalent
two-valued formal context and vice versa using
$(g,m)\in I \Leftrightarrow I(g,m)=\x$.
%
In the following we will use these two definitions interchangeably.

A formal context can be represented as table with rows of objects and
columns of attributes.  The table entries signify the relation of
objects and attributes.  It is customary to put an ``$\x$'' to
indicate that an object has an attribute and to either put an ``$\o$''
or leave a table cell empty to indicate that an object does not have
an attribute.  For readability we will leave the cells empty. See
\cref{fig:formal_context_examples} for representations of a formal
context using part of the introductory example about properties of the
sports disciplines of the Summer Olympic Games 2020.

\begin{figure}
  \begin{center}
    \begin{cxt}
      \cxtName{}
      \att{a} \att{b} \att{c} \att{d} \att{e}
      \obj{xxxxx}{Aquatics -- Swimming} \obj{.xxxx}{Badminton}
      \obj{..x.x}{Gymnastics -- Rhythmic}
    \end{cxt}
    \begin{cxt}
      \cxtName{} \cxtNichtKreuz{}
      \att{a} \att{b} \att{c} \att{d} \att{e}
      \obj{xxxxx}{Aquatics -- Swimming}
      \obj{.xxxx}{Badminton}
      \obj{..x.x}{Gymnastics -- Rhythmic}
    \end{cxt}
  \end{center}
%
%
  \caption{Example of two representations of a formal context about
    the sports disciplines of the Summer Olympic Games 2020, see
    \cref{footnote:Olympic_Games_Sources}, with attributes
    a)~\emph{$\geq$~10~events}, b)~\emph{$\geq$~5~events},
    c)~\emph{female~only~events}, d)~\emph{male~only~events}, and
    e)~\emph{part of $\geq$~8 Olympics}.  }
  \label{fig:formal_context_examples}
\end{figure}

\begin{defn}(derivation operators, c.f. \cite{Wille82}) Let
  $\context = \GMI$ be a formal context.  For a set of objects
  $A\subseteq G$ we define the set of attributes common to the objects
  in $A$ by
  \[
    A' \coloneqq \{ m \in M \mid \forall g\in A: I(g,m)=\x \}.
  \]
  Analogously, for a set of attributes $B\subseteq M$ we define the
  set of objects that have all the attributes from $B$ by
  \[
    B' \coloneqq \{ g \in G \mid \forall m\in B: I(g,m)=\x \}.
  \]
\end{defn}
\begin{defn}[formal concept, intent, extent, c.f.
  \cite{Wille82}]
  Let $\context = \GMI$ be a formal context.  A \emph{formal concept}
  of $\context$ is a pair $(A,B)$ with $A\subseteq G$ and
  $B\subseteq M$ such that $A'=B$ and $A=B'$.  We call $A$ the
  \emph{extent} and $B$ the \emph{intent} of the formal concept
  $(A,B)$.
\end{defn}
Note that for any set $A \subseteq G$ the set $A'$ is the intent of a
concept and for any set $B\subseteq M$ the set $B'$ is the extent of a
concept.
\subsection{Incomplete Context}
In the following we introduce \emph{incomplete contexts} as a means to
model partial knowledge.  Note that once again we use an incidence
function.
\begin{defn}[incomplete context, c.f. \cite{conf/iccs/BurmeisterH00}]
  An \emph{incomplete context} is a three-valued context
  $\context = \GMWI$ consisting of a set of objects $G$, a set of
  attributes $M$, a set of values $\values$ and an incidence function
  $I:G\times M \to \values$.  For $g\in G$ and $m\in M$ we say that
  ``\emph{it is known that $g$ has $m$}'' if $I(g,m)=\times$, ``\emph{it is
    known that $g$ does not have $m$}'' if $I(g,m)=\o$ and ``\emph{it is
    not known whether $g$ has $m$}'' if $I(g,m)=\q$.
\end{defn}
Like a formal context, an incomplete context can be represented as a
table of objects and attributes with the entries signifying the
relation.  Here we use ``$\x$'' to indicate that object and attribute
are known to be related, an empty cell to indicate they are known not
to be related and ``$\q$'' to indicate that the relation is not known.
See \cref{fig:incomplete_context_example} for an example of incomplete
knowledge added to the example in
\cref{fig:formal_context_examples}.
\begin{figure}
  \begin{center}
    \begin{cxt}
      \cxtName{} \cxtNichtKreuz{}
      \att{a} \att{b} \att{c} \att{d} \att{e}
      \obj{xxxxx}{Aquatics -- Swimming} \obj{.xxxx}{Badminton}
      \obj{..x.x}{Gymnastics -- Rhythmic} \obj{??xx?}{Taekwondo}
    \end{cxt}
  \end{center}
  \caption{Example of an incomplete contexts with attributes
    a)~\emph{$\geq$~10~events}, b)~\emph{$\geq$~5~events},
    c)~\emph{female~only~events}, d)~\emph{male~only~events}, and
    e)~\emph{part of $\geq$~8 Olympics}.  Imagine someone saw the
    context in \cref{fig:formal_context_examples} and further knew
    that `Taekwondo' is an Olympic discipline but was unsure how many
    events there are and for how long it has been Olympic.}
  \label{fig:incomplete_context_example}
\end{figure}

\begin{notation}[$\relI^{\x}$\!, $\relI^{\o}$, $\relI^{\q}$]
  Let $\context = \GMWI$ be an incomplete context.  To refer to
  certain subsets of $G\times M$ we define:
  \begin{align*}
    \relI^{\x} &\coloneqq \{(g,m)\in G\times M \mid I(g,m)=\x \}\\
    \relI^{\o} &\coloneqq \{(g,m)\in G\times M \mid I(g,m)=\rlap{\:\!$\o$}\phantom{\x} \}\\
    \relI^{\q} &\coloneqq \{(g,m)\in G\times M \mid I(g,m)=\rlap{\:\!$\q$}\phantom{\x} \}
  \end{align*}
    %
\end{notation}
\begin{notation}[$\context|_{A}$]
  The \emph{restriction} of an incomplete context ${\context = \GMWI}$
  \emph{to a subset} $A\subseteq G$ is denoted by
  $\context|_{A} \coloneqq (A,M,\values,I|_{A\times M})$, where
  \[
    I|_{A\times M}\colon A\times M \rightarrow \values \text{ with }
    I|_{A\times M}(g,m) = I(g,m).
  \]
\end{notation}
If an incomplete context $\context = \GMWI$ does not contain any
``$\q$'', i.e., all relations of objects and attributes are known, it
can be identified with a formal context
$\tilde{\context} = (G,M,\relI)$ with $I = \relI^{\x}$.  We also call
such a context \emph{complete} incomplete context.

Any formal context $\context = \GMI$ can also be identified with an
incomplete context where the incidence relation is completely known,
i.e,.  as a context $\context = (G,M,\values,J)$ where
\[
  J(g,m) \coloneqq \begin{cases}
    \x &\text{ if } (g,m) \in I\\
    \o &\text{ if } (g,m) \not\in I
  \end{cases}
\]
Therefore complete incomplete contexts and formal contexts will be
used synonymously, the particular representation will be mentioned if
it is necessary and can not be inferred from the context.

Similar to the case of formal contexts we define derivation operators
for incomplete contexts.  Since it may be unknown if an object has an
attribute we define two operators, one for the relations that are
known and one for the relations that are possible.
\begin{defn}[certain and possible derivation operators,
  c.f. \cite{conf/iccs/BurmeisterH00,holzer2004knowledgeP1}]
  Given an incomplete context $\context = \GMWI$ we define the
  \emph{certain intent} for $A\subseteq G$ by
  \[
    A^\Box \coloneqq \{ m \in M \mid I(g,m)=\x \text{ for all } g\in A
    \}
  \]
  and the \emph{possible intent} by
  \begin{align*}
    A^\Diamond \coloneqq & \{ m \in M \mid I(g,m)\in\{\x,\q\} \text{ for all } g\in A \}\\
    = & \{ m \in M \mid I(g,m)\neq \o \text{ for all } g\in A \}.
  \end{align*}

  For $B\subseteq M$ we define the \emph{certain extent} $B^\Box$ and
  the \emph{possible extent} $B^\Diamond$ in the same way.  For
  $g\in G$ and $m\in M$ we use the abbreviations $g^\Box$,
  $g^\Diamond$, $m^\Box$ and $m^\Diamond$.
\end{defn}

\begin{exmpl}
  Recall the incomplete context from
  \cref{fig:incomplete_context_example}.  Let us take a look at the
  \emph{possible intent} and the \emph{certain intent} of
  $A=\{\text{Taekwondo, Badminton}\}$.  This means we look at the
  set of attributes that both \emph{Taekwondo} and \emph{Badminton}
  certainly have and at the set of attributes that they possibly have.
  Here, the \emph{possible intent}
  $ \{\text{Taekwondo, Badminton}\}^\Diamond$ is
  $ \{\text{b,c,d,e}\}$, whereas the \emph{certain intent}
  $ \{\text{Taekwondo, Badminton}\}^\Box$ is $ \{\text{c,d}\}$.
\end{exmpl}

\begin{rem}
  In the case of a formal context $\context = \GMI$ (or a complete
  incomplete context $\context=\GMWI$, i.e., with
  $\relI^\q=\emptyset$) the certain and possible intent and extent are
  the same and are equivalent to the usual intent and extent for
  formal contexts, i.e., for $A\subseteq G$ and $B\subseteq M$ we have
    $$A'= A^\Box = A^\Diamond$$
    $$B'= B^\Box = B^\Diamond$$
  \end{rem}

\subsection{Order on Incomplete Contexts}
The values $\values$ can be ordered in at least two ways that make
sense semantically: We can order them according to their trueness,
i.e., $\o < \q < \x$ \emph{(trueness order}, see
\cref{fig:trueness_order_on_values}), and we can order them according
to the amount of information they represent, i.e., $\q < \x$ and
$\q < \o$ (\emph{information order}, see
\cref{fig:information_order_on_values}).  In the latter case the
values $\x$ and $\o$ are incomparable.
\begin{figure}[H]
  \captionsetup[subfigure]{font=footnotesize} \centering
    \subcaptionbox{trueness order\\(Kleene-Logic)\label{fig:trueness_order_on_values}}
    [.49\textwidth]{
        \begin{tikzpicture}[mystyle/.style={draw,circle,inner sep=0pt,minimum size=1.25em}]
        \node [mystyle] (x) at (0,2) {$\x$};
        \node [mystyle] (q) at (0,1) {$\q$};
        \node [mystyle] (o) at (0,0) {$\o$};
        \draw (o) -- (q);
        \draw (q) -- (x);
      \end{tikzpicture}}
    \subcaptionbox{information order\\ (Kripke-Semantics)\label{fig:information_order_on_values}}
    [.49\textwidth]{
        \begin{tikzpicture}[mystyle/.style={draw,circle,inner sep=0pt,minimum size=1.25em}]
        \node [mystyle] (x) at (0.75,1) {$\x$};
        \node [mystyle] (q) at (0,0) {$\q$};
        \node [mystyle] (o) at (-0.75,1) {$\o$};
        \draw (q) -- (o);
        \draw (q) -- (x);
      \end{tikzpicture}}
    \caption{Two orders on the values $\values$}
  \end{figure}
  Both of these orders are useful when thinking about how to evaluate
  formulas in an incomplete context, namely in terms of three-valued
  logics using Kleene-algebras and in terms of Kripke semantics,
  cf. \cref{subsec:attribute_implications}.

  With the Kripke semantics we think of formulas of being
  \emph{certainly valid} (see, \cref{def:valid-formula}) if they hold
  in every \emph{completion} of an incomplete context $\context$,
  i.e., in all contexts where every ``$\q$'' in $\context$ is
  replaced by an ``$\x$'' or  an ``$\o$''.  This is computationally very
  inefficient, since we need to evaluate the formulas in
  $2^{|I^?(\context)|}$ many contexts.  However, we are mainly
  interested in a subset of formulas, the so called \emph{attribute
    implications} (see, \cref{defn:attribute-implication}) where the
  evaluation in the \emph{information order} is equivalent to the
  evaluation in the \emph{trueness order} (see,
  \cref{lem:equivalence-evaluation-attribute-implications-1,lem:equivalence-evaluation-attribute-implications-2}),
  which is far more efficient in terms of computational complexity.

  The \emph{information order} is further useful to define an order on
  incomplete contexts (see, \cref{def:information_order,defn:generalized-information-order}). This
  provides the basis to compare the knowledge of experts (see,
  \cref{defn:comparing-expert-knowledge}) and the results of
  collaborative attribute explorations (see,
  \cref{subsec:comparing-and-evaluating-collab-strategies}).
  Most importantly, it enables combining expert knowledge.

  Now we define an order where we compare two incomplete contexts with regard to the information
  they contain using the information order,
  cf. \cref{fig:information_order_on_values}.
  \begin{defn}[information order, c.f. \cite{holzer2001dissertation}]\label{def:information_order}
    Let $\context_1 \coloneqq \GMWI[1]$ and
    $\context_2 \coloneqq \GMWI[2]$ be two incomplete contexts defined
    on the same object and attribute sets.  We say that $\context_2$
    \emph{contains at least as much information as} $\context_1$,
    abbreviated $\context_1 \leq \context_2$, if
    \begin{align*}
      \forall g \in G,\  \forall m\in M\colon& I_1(g,m)= \x \Implies I_2(g,m)=\x \\
      \tand& I_1(g,m)=\rlap{\:\!$\o$}\phantom{\x}\Implies I_2(g,m)=\o
    \end{align*}
    which is equivalent to
    \[
      \forall g\in G,\ \forall m\in M :\ I_1(g,m) \leq I_2(g,m)
    \]
    where $\leq$ is the information order on $\values$.
  \end{defn}
  
  \begin{rem}
    Note that we are overloading the notation $\leq$. It is used both
    on the value-level and the context-level. 
    We also use the name \emph{information order} for this order on the context-level.
  \end{rem}

  \begin{exmpl} \label{example:information-order}
  Imagine we asked four people
  which of the properties a)~\emph{$\geq$~10~events}, b)~\emph{$\geq$~5~events},
    c)~\emph{female~only~events}, d)~\emph{male~only~events}, and
    e)~\emph{part of $\geq$~8 Olympics} of \emph{Taekwondo} has, and the four incomplete
  contexts $\context_1,\ldots, \context_4$ represent their
  answers.



    \begin{center}
    \begin{cxt}
      \cxtName{$\context_4$} \cxtNichtKreuz{}
      \att{a} \att{b} \att{c} \att{d} \att{e}
      \obj{.xxx?}{Taekwondo}
    \end{cxt}

    \vspace{1ex}
    \begin{cxt}
      \cxtName{$\context_2$} \cxtNichtKreuz{}
      \att{a} \att{b} \att{c} \att{d} \att{e}
      \obj{??xx?}{Taekwondo}
    \end{cxt}
    \begin{cxt}
      \cxtName{$\context_3$} \cxtNichtKreuz{}
      \att{a} \att{b} \att{c} \att{d} \att{e}
      \obj{.xx??}{Taekwondo}
    \end{cxt}

    \vspace{1ex}
    \begin{cxt}
      \cxtName{$\context_1$} \cxtNichtKreuz{}
      \att{a} \att{b} \att{c} \att{d} \att{e}
      \obj{??x??}{Taekwondo}
    \end{cxt}
    \end{center}

  We see that $ \context_4$ \emph{contains at least as much information as}
  $\context_1$, i.e., $\context_1 \leq \context_4$. Further, the
  contexts $\context_2$ and $\context_3$ are incomparable,
  since $\context_2\nleq \context_3$ and  $\context_3\nleq \context_2$.
  \end{exmpl}

  The information order on the set of incomplete contexts for fixed
  sets of objects and attributes, i.e., on the set
  $\values^{G\times M}$, corresponds to 
  the component-wise comparison of all object-attribute incidences
  $(g,m)\in G\times M$ in the information order on $\values$.  In this
  order the infimum for any two such incomplete contexts exists, it is
  the component-wise infimum of the contexts.  Equally, the supremum
  of two incomplete contexts on the same objects and attributes is the
  component-wise supremum.  However, the supremum only exists if the
  contexts do not contain any contradictory information.

  Given two incomplete contexts $\context_{1}$ and $\context_{2}$ the
  \emph{contradictory information} are all pairs $(g,m)$ where an
  object $g$ is known to have an attribute $m$ in one context and
  known not to have it in the other, i.e.,
  $(\relI_1^\x \cap \relI_2^\o) \cup (\relI_1^\o \cap \relI_2^\x)$.
  Note that it is not necessary for the contexts to be defined on the
  same objects and attributes to determine the conflicting
  information.

  The infimum and supremum of two incomplete contexts represent their
  shared and joint information.

  \begin{cor}[c.f. \cite{holzer2001dissertation}] \label{cor:infimum-supremum-context}
    Let $G$ and $M$ be fixed sets of objects and attributes.  Let
    $\values^{G\times M}$ be the set of all incomplete contexts with
    objects $G$ and attributes $M$.  Let
    $\context_1,\context_2\in \values^{G\times M}$ with
    $\context_1 \coloneqq \GMWI[1]$ and
    $\context_2 \coloneqq \GMWI[2]$.
    \begin{enumerate}[label=\alph*)]
    \item \label{cor:infimum_context} Then $\values^{G\times M}$
      together with the information order $\leq$
      (see~\cref{def:information_order}) forms a $\land$-semilattice
      where the \emph{infimum} of two incomplete contexts $\context_1$
      and $\context_2$ is given by

    $$\context_1\land\context_2 \coloneqq (G,M,\values,I_1 \land I_2)$$ 
    with
    \[
      I_1 \land I_2: G\times M \rightarrow \values
    \]
    where
    \[
      (I_1 \land I_2)(g,m) \coloneqq I_1(g,m) \land
      I_2(g,m)=\begin{cases}
        \x  &\text{ if } I_1(g,m)=\x=I_2(g,m)\\
        \o  &\text{ if } I_1(g,m)=\rlap{\:\!$\o$}\phantom{\x}=I_2(g,m)\\
        \q &\text{ otherwise}
      \end{cases}.
    \]

  \item \label{cor:supremum_context} Further, if the two contexts
    $\context_1$ and $\context_2$ have no conflicting information the
    \emph{supremum} $\context_1\vee\context_2$ exists and is given by
    $$\context_1\lor\context_2 \coloneqq (G,M,\values,I_1 \lor I_2)$$ 
    with
    \[
      I_1 \lor I_2: G\times M \rightarrow \values
    \]
    where
    \[
      (I_1 \lor I_2)(g,m) \coloneqq I_1(g,m) \lor
      I_2(g,m)= \begin{cases}
        \x  &\text{ if } I_1(g,m)=\x \text{ or } I_2(g,m)=\x\\
        \o  &\text{ if } I_1(g,m)=\rlap{\:\!$\o$}\phantom{\x} \text{ or } I_2(g,m)=\o\\
        \q &\text{ otherwise}
      \end{cases}.
    \]
  \end{enumerate}
\end{cor}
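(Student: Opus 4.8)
The plan is to reduce both claims to the structure of the value set $\values$ under the information order of \cref{fig:information_order_on_values}, and then lift to contexts via the fact, recorded just before the statement, that the information order on $\values^{G\times M}$ is exactly the componentwise (product) order induced by the information order on $\values$. The central observation is that $(\values,\leq)$ is itself a $\land$-semilattice: $\q$ is the least element while $\x$ and $\o$ are two incomparable maximal elements. Consequently any two values have a greatest lower bound, whereas $\x$ and $\o$ have no common upper bound at all.

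For part (a), I would first verify by a short case distinction that the binary operation on $\values$ sending $(\x,\x)\mapsto\x$, $(\o,\o)\mapsto\o$ and every remaining pair to $\q$ computes the meet in $(\values,\leq)$: in each case the claimed value lies below both arguments, and it is the largest such value because the only element below both $\x$ and $\o$ is $\q$. With this in hand, the standard fact about product orders applies: an incomplete context $\context$ is a lower bound of $\context_1$ and $\context_2$ iff $I(g,m)\leq I_1(g,m)$ and $I(g,m)\leq I_2(g,m)$ for every $(g,m)$, so the pointwise meet $I_1\land I_2$ is a lower bound and dominates every other lower bound componentwise. Hence it is the infimum, which is precisely the formula stated, and $\values^{G\times M}$ is a $\land$-semilattice.

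For part (b) the argument is dual but now conditional, since the join on $\values$ is only partial. I would show that a pair $\{u,v\}\subseteq\values$ admits a least upper bound unless $\{u,v\}=\{\x,\o\}$, the obstruction being exactly that $\x$ and $\o$ are incomparable maximal elements. Lifting to contexts, an upper bound of $\context_1$ and $\context_2$ exists iff every component pair $(I_1(g,m),I_2(g,m))$ has an upper bound, i.e.\ iff no $(g,m)$ satisfies $\{I_1(g,m),I_2(g,m)\}=\{\x,\o\}$; and this is precisely the condition $(\relI_1^\x\cap\relI_2^\o)\cup(\relI_1^\o\cap\relI_2^\x)=\emptyset$ of having no conflicting information. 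When it holds, the pointwise join $I_1\lor I_2$ is a well-defined upper bound and, by the same product-order argument as before, the least one, giving the stated formula.

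The routine verifications --- the case tables for $\land$ and $\lor$ on $\values$ and the componentwise domination arguments --- carry no real difficulty. The one point demanding care is the conditional existence in part (b): I must argue the contrapositive correctly, namely that a single conflicting incidence already destroys \emph{every} candidate upper bound (because $\x$ and $\o$ are maximal and incomparable), rather than merely showing that the pointwise formula fails to be defined at that coordinate. Making explicit that the failure of the supremum is global, triggered by any one conflicting pair, is the main obstacle, and it is exactly what ties the order-theoretic statement to the combinatorial ``no conflicting information'' condition.
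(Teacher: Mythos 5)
Your argument is correct and is essentially the approach the paper itself takes: the text immediately preceding the corollary observes that the information order on $\values^{G\times M}$ is the componentwise (product) order on $(\values,\leq)$, so the infimum is the pointwise meet and the supremum exists exactly when no component pair is $\{\x,\o\}$, i.e.\ when there is no conflicting information; the paper then defers the formal details to \cite{holzer2001dissertation}. Your added care about the contrapositive in part (b) --- that one conflicting incidence kills every upper bound because $\x$ and $\o$ are incomparable maximal elements --- is a correct and worthwhile elaboration of what the paper leaves implicit.
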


\begin{exmpl}
  Recall the four incomplete contexts $\context_1 ,\ldots,
  \context_4$ from \cref{example:information-order}.
  We have $\context_1 = \context_2 \land \context_3$.
  Further, the contexts $\context_2$ and $\context_3$ contain no
  conflicting information and their supremum is  $\context_4 = \context_2 \lor \context_3$. 
\end{exmpl}

\begin{rem}
%
  We later show that given an incomplete context $\context_0$ the set
  of $\{\context \mid \context \leq \context_0\}$ is a bounded lattice
  with respect to the information order and that the operators $\land$
  and $\lor$ coincide with the supremum and infimum induced by the
  information order on incomplete contexts, see
  \cref{cor:lattices_are_the_same}.
\end{rem}

\subsection{Attribute Implications}
\label{subsec:attribute_implications}
For the rest of this paper the set of attributes $M$ is considered to
be finite.  One fundamental concept in FCA is that of \emph{attribute
  implications}.  They are used to describe dependencies between
attributes of a formal or incomplete context and are defined as
propositional formulas over an attribute set $M$ in the following way:

\begin{defn}[formulas, models, c.f. \cite{holzer2004knowledgeP1}]\label{def:formulas}
  We define $\formulas{M}$ as the set of propositional formulas over
  $M$ where $M$ is the set of propositional variables.  Let
  $\alpha \in \formulas{M}$ and $B\subseteq M$.  We say $B$ \emph{is a
    model of} $\alpha$ or equally $B$ \emph{respects} $\alpha$ if the
  interpretation of $\alpha$ is true for the valuation
  $v_B : M \to \{\mathrm{true}, \mathrm{false}\}$ with
  $v_B(m) =\mathrm{true}\: :\Leftrightarrow m\in B$.

  For a set of formulas $P \subseteq \formulas{M}$ define the set of
  models of $P$ by
  $\respectingmodels{P} \coloneqq \{ B\subseteq M \mid B \text{
    respects each } \alpha \in P \}$.
    
  For $A\subseteq M$ define $\langle A \rangle_P \coloneqq \bigcap
  \{B\in \respectingmodels{P} \mid A\subseteq B \}$.
\end{defn}
%
The set of \emph{attribute implications} over a set of attributes
$M$ is a specific subset of the propositional formulas $F(M)$:

\begin{defn}[attribute implication, $A\implies B$, c.f. \cite{conf/iccs/BurmeisterH00,holzer2004knowledgeP1}]\label{defn:attribute-implication}
  For $S\subseteq M$ we let $\bigwedge S \coloneqq (s_1\land \ldots
  \land s_n)$ if $S = \{s_1, \ldots ,s_n\}$ and $\bigwedge S \coloneqq
  \mathrm{true}$ if $S=\emptyset$.  For $A\subseteq M$ and $B
  \subseteq M$ we write
  \[
    A\implies B \ \ \text{ for }\ \ \bigwedge A \implies \bigwedge B
  \]
  and call this formula \emph{attribute implication} or short
  \emph{implication}.  If
  $A=\{a_1,\ldots,a_m\}$ we also write $a_1\ldots a_m$ instead of
  $A$ and if $B=\{b_1,\ldots,b_n\}$ we also write $b_1\ldots
  b_n$ instead of $B$, e.g., we write $a_1\ldots a_m \implies
  b_1\ldots b_n$ instead of $A\implies B$.  Further,
  $A$ is referred to as \emph{premise} and
  $B$ as \emph{conclusion} of the implication.
  We abbreviate the set of all implications over the attribute set
  $M$ by $$\Impm \coloneqq \{ A\implies B \mid A,B \subseteq M \}.$$
\end{defn}
%
Note that a set $C\subseteq
M$ respects the attribute implication $A\implies
B$ if and only if $A\subseteq C$ implies $B\subseteq C$.
\begin{defn}\label{def:armstrong-rules-and-cons}{\hspace{-0.5em}\footnote{Note
      that Holzer used a different but equivalent set of rules to
      define the consequence operator in
      \cite{holzer2001dissertation,holzer2004knowledgeP1,holzer2004knowledgeP2},
      however, it is common to utilize the Armstrong rules,
      cf. \cite{GanterWille1999,conf/iccs/BurmeisterH00}.}\ }
  For a set
  $\mathcal{L}$ of attribute implications over an attribute set
  $M$ we define
  $Cons{(\mathcal{L})}$ as the set of all implications obtainable from
  $\mathcal{L}$ by using the Armstrong rules
  \cite{conf/ifip/Armstrong74} (for sets $A,B,C,D$)
  \begin{align*}
    \frac{}{A \implies A}, &&\frac{A \implies C}{A \cup B \implies C}, && \frac{A\implies B \quad B \cup C \implies D}{A\cup C \implies D}.     
  \end{align*}
  \vspace{1ex}
\end{defn}
\begin{defn}[valid formula,
  c.f. \cite{holzer2004knowledgeP1}]\label{def:valid-formula}
  Given a formal context $\context = \GMI$ we call a formula $\alpha
  \in \formulas{M}$ \emph{valid} for an object $g\in G$ if
  $g'$ is a model of $\alpha$.  The formula is valid in
  $\context$ if it is valid for all objects $g\in
  G$.  An attribute implication $A\implies B \in
  \Impm$ is valid in $\context$ if and only if every object $g\in
  G$ that has all the attributes in
  $A$ also has all the attributes in $B$.  We then say
  $B$ \emph{follows} from $A$ in $\context$.
\end{defn}
In the case of incomplete contexts, i.e., three-valued contexts, there
exist many different logics to evaluate formulas, e.g., the
Kleene-Logic~\cite{Kleene1968-KLEITM-3} and other three-valued logics,
cf. \cite{Pagliani1997,holzer2001dissertation,conf/iccs/BurmeisterH00}.
Here we use the \emph{Kripke-semantics}.
\begin{defn}[certainly valid, satisfiable,
  c.f. \cite{conf/iccs/BurmeisterH00,holzer2004knowledgeP1}]
  Given an incomplete context
  $\context=\GMWI$ and a formula $\alpha\in
  \formulas{M}$.  A formal context
  $\tilde\context$ is a \emph{completion} of an incomplete context
  $\context$ if $\context \leq \tilde\context$.  The formula
  $\alpha$ is \emph{Kripke-valid} or \emph{certainly valid} if it is
  valid in every completion of \context.  Further the formula
  $\alpha$ is \emph{satisfiable} or \emph{possibly valid} if it is
  valid in at least one completion of \context.
\end{defn}
\begin{rem}
  For a complete context both \emph{certain} and \emph{possible
    validity} are equivalent and coincide with the usual formulation
  (as in~\cref{def:valid-formula}) of valid formulas for formal
  contexts.
\end{rem}
\begin{exmpl}
  As an example recall the incomplete context in
  \cref{fig:incomplete_context_example}.  Here the implication $b
  \implies
  d$ is certainly valid as it is valid in every completion of the
  context, whereas $c \implies
  e$ is satisfiable but not certainly valid since `Taekwondo' has
  `female only events' but could or could not be `part of at least
  eight Olympic Games'.
\end{exmpl}

\begin{defn}[$\Imp$, $\Sat$,
  c.f. \cite{holzer2004knowledgeP1,holzer2001dissertation}]\label{def:Imp-Sat}
  Given an incomplete context $\context$ we denote the set of all
  certainly valid implications by
    $$\Imp[\context] \coloneqq  \{A\implies B \in \Impm \mid A\implies B \text{ is certainly valid in } \context \}$$
    and the set of all satisfiable implications by
    $$\Sat[\context] \coloneqq \{ A\implies B \in \Impm \mid A \implies B \text{ is satisfiable in } \context \}.$$
  \end{defn}

  For the rest of this section we recollect some facts about attribute
  implications in the case of incomplete contexts.  The set of
  certainly valid implications is closed with respect to the Armstrong
  rules.
  \begin{thm}[see
    \cite{holzer2004knowledgeP1,holzer2001dissertation,GanterWille1999}]
    With $Cons(\cdot)$ and $Imp(\cdot)$ as defined in
    \cref{def:armstrong-rules-and-cons,def:Imp-Sat} we have
    $\Cons{\Imp[\context]} = \Imp[\context]$ for every incomplete
    context $\context$.
  \end{thm}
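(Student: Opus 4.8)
The plan is to prove the two inclusions separately. The inclusion $\Imp[\context] \subseteq \Cons{\Imp[\context]}$ is immediate, since $\Cons{\cdot}$ is a closure operator and in particular extensive: every implication $A \implies C$ in a set $\mathcal{L}$ can be re-derived from $\mathcal{L}$, for instance by applying the augmentation rule with $B = \emptyset$, which yields $A \cup \emptyset \implies C$, i.e.\ $A \implies C$ again. Hence the substantial direction is $\Cons{\Imp[\context]} \subseteq \Imp[\context]$, and I would recast it as the assertion that $\Imp[\context]$ is \emph{already} closed under the Armstrong rules; equivalently, that applying any Armstrong rule to certainly valid implications again produces a certainly valid implication.

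The key step is to rewrite certain validity as an intersection over completions. By the definition of validity and of $\Imp[\cdot]$ (see \cref{def:valid-formula,def:Imp-Sat}), an implication is certainly valid in $\context$ if and only if it is valid in every completion $\tilde{\context}$, so
\[
  \Imp[\context] = \bigcap \{\, \Imp[\tilde{\context}] \mid \context \leq \tilde{\context},\ \tilde{\context} \text{ a formal context} \,\}.
\]
Since $M$ is finite, completions always exist and the family is non-empty. For each individual completion $\tilde{\context}$, which is an ordinary (complete) formal context, the classical soundness of the Armstrong rules \cite{GanterWille1999} guarantees that $\Imp[\tilde{\context}]$ is closed under them, i.e.\ $\Cons{\Imp[\tilde{\context}]} = \Imp[\tilde{\context}]$.

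It then remains to observe that an arbitrary intersection of Armstrong-closed sets is again Armstrong-closed. The reason is that each Armstrong rule is of Horn type, deriving a single conclusion from finitely many premises. If all premises of a rule lie in $\Imp[\context]$, then they lie in every member $\Imp[\tilde{\context}]$ of the family; by closedness of each $\Imp[\tilde{\context}]$ the conclusion lies in every $\Imp[\tilde{\context}]$ as well, hence in the intersection $\Imp[\context]$. The premise-free reflexivity rule is handled the same way, as $A \implies A$ belongs to every $\Imp[\tilde{\context}]$. Consequently $\Imp[\context]$ is Armstrong-closed, which gives $\Cons{\Imp[\context]} \subseteq \Imp[\context]$ and, together with the first inclusion, the claimed equality.

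I expect the only genuine obstacle to be the reformulation of certain validity as the intersection $\bigcap_{\tilde{\context}} \Imp[\tilde{\context}]$, and in particular making sure the cited soundness of the Armstrong rules is applied to each completion as a bona fide complete formal context rather than to the three-valued context $\context$ itself; once this is in place, the stability of Armstrong-closedness under intersection is a routine rule-by-rule check. A more self-contained alternative would be to verify directly that each of the three rules preserves certain validity by a semantic argument on completions, but the intersection argument is shorter and isolates precisely where completeness of $\tilde{\context}$ is used.
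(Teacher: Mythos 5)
Your proposal is correct. Note that the paper does not prove this statement at all -- it is quoted as a known result with references to Holzer and Ganter--Wille -- so there is no in-paper argument to compare against; your write-up is a legitimate self-contained justification. The decomposition is sound: $\Imp[\context]$ is by definition the intersection $\bigcap\{\Imp[\tilde{\context}] \mid \context \leq \tilde{\context}\}$ over all completions, each $\Imp[\tilde{\context}]$ is Armstrong-closed by the classical soundness of the Armstrong rules for (complete) formal contexts, and closedness under Horn-type inference rules is preserved under arbitrary non-empty intersections; together with extensivity of $\Cons{\cdot}$ this gives the equality. Two minor remarks: finiteness of $M$ is not actually needed for completions to exist (one simply resolves each ``$\q$'' arbitrarily), and the extensivity step is even more immediate if one reads ``obtainable from $\mathcal{L}$'' as including derivations of length zero, though your explicit derivation via augmentation with $B=\emptyset$ works just as well.
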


  The operators $\cdot^\Box$ and $\cdot^\Diamond$ can be used to
  efficiently compute whether an attribute implication is certainly
  valid or satisfiable.  This corresponds to the evaluation in
  Kleene-Logic
  \cite{conf/iccs/BurmeisterH00,holzer2001dissertation,Pagliani1997}.

  The following lemmas clarify that implicational formulas can be
  evaluated in Kripke semantics and Kleene-Logic giving the same
  result.  Note that for arbitrary attribute formulas this is not
  true.
  \begin{lem} [see \cite{holzer2004knowledgeP1} Lemma
    5] \label{lem:equivalence-evaluation-attribute-implications-1}
    Let $\context = \GMWI$ be an incomplete context and
    $A,B \subseteq M$. Then the following conditions are equivalent:
    \begin{enumerate}
    \item $A\implies B \in \Imp[\context]$
    \item $B\setminus A \subseteq A^{\Diamond\Box}$
    \item $A^\Diamond \subseteq (B\setminus A)^\Box$
    \item For all $g\in G$ with $A\subseteq g^\Diamond$ we have
      $B\setminus A \subseteq g^\Box$
    \item For all $g\in G$ holds: If $I(g,a)\neq \o$ for all $a\in A$
      then $I(g,b)=\x$ for all $b\in B\setminus A$.
    \end{enumerate}
  \end{lem}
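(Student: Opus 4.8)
The plan is to split the five-way equivalence into two essentially independent parts: the purely combinatorial block $(2)\Leftrightarrow(3)\Leftrightarrow(4)\Leftrightarrow(5)$, which is just definition-chasing through the derivation operators, and the genuinely semantic equivalence $(5)\Leftrightarrow(1)$, which is where the completion semantics enters. Throughout I would lean on the two identities $A^\Diamond = \{g\in G \mid A\subseteq g^\Diamond\}$ and $(B\setminus A)^\Box = \{g\in G \mid B\setminus A\subseteq g^\Box\}$, both immediate from the definitions of possible extent and certain extent, so that all of conditions (2)--(5) can be read as encodings of the single statement ``for every $g\in G$ with $A\subseteq g^\Diamond$ and every $b\in B\setminus A$ one has $I(g,b)=\x$''.

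First I would verify $(2)\Leftrightarrow(3)\Leftrightarrow(4)\Leftrightarrow(5)$ by unfolding. Condition (5) is literally (4) with the abbreviations $A\subseteq g^\Diamond$ and $B\setminus A\subseteq g^\Box$ written out via the values of $I$. Condition (3), $A^\Diamond \subseteq (B\setminus A)^\Box$, says exactly that every object possibly carrying all of $A$ certainly carries all of $B\setminus A$, i.e.\ (4) phrased with extents. Condition (2), $B\setminus A \subseteq A^{\Diamond\Box}$, says that each $b\in B\setminus A$ satisfies $I(g,b)=\x$ for all $g\in A^\Diamond$, which is (4) again with the two quantifiers interchanged. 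There is no real obstacle here; it is bookkeeping.

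The substance is $(5)\Leftrightarrow(1)$. For $(5)\Rightarrow(1)$ I would take an arbitrary completion $\tilde\context$ with incidence $\tilde I$ and an arbitrary object $g$ having all of $A$ in $\tilde\context$, i.e.\ $\tilde I(g,a)=\x$ for all $a\in A$. Since a completion only replaces $\q$ by $\x$ or $\o$, any entry equal to $\o$ in $\context$ remains $\o$; hence $\tilde I(g,a)=\x$ forces $I(g,a)\neq\o$, giving $A\subseteq g^\Diamond$. Condition (5) then yields $I(g,b)=\x$, and therefore $\tilde I(g,b)=\x$, for all $b\in B\setminus A$; together with $\tilde I(g,b)=\x$ for $b\in A\cap B$ this shows $g$ has all of $B$ in $\tilde\context$. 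As $g$ and $\tilde\context$ were arbitrary, $A\implies B$ is certainly valid.

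For $(1)\Rightarrow(5)$ I would argue by contraposition, and this is where the one delicate point lies. If (5) fails, there is a $g$ with $I(g,a)\neq\o$ for all $a\in A$ and some $b_0\in B\setminus A$ with $I(g,b_0)\neq\x$, so $I(g,b_0)\in\{\o,\q\}$. I would then construct a completion $\tilde\context$ making $g$ a counterexample: complete every $\q$ among the entries $(g,a)$ with $a\in A$ to $\x$ (possible since none of them is $\o$), complete $(g,b_0)$ to $\o$ (possible since $I(g,b_0)\neq\x$), and complete all remaining $\q$-entries arbitrarily, say to $\o$. The hard part will be checking consistency of this prescription: because $b_0\notin A$, the instruction ``set $(g,a)$ to $\x$ for $a\in A$'' never collides with ``set $(g,b_0)$ to $\o$'', so in $\tilde\context$ the object $g$ has all of $A$ yet lacks $b_0\in B$, whence $A\implies B$ fails in $\tilde\context$ and is not certainly valid. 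This use of $b_0\in B\setminus A$ rather than merely $b_0\in B$ is precisely why the statement is phrased with $B\setminus A$, and is the point one must get right.
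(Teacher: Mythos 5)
The paper itself gives no proof of this lemma --- it is quoted verbatim from Holzer (Lemma~5 of the cited reference) in the ``Recollection of known Results'' section --- so there is no in-paper argument to compare yours against. Your proof is correct and is the standard one: the identities $A^\Diamond=\{g\in G\mid A\subseteq g^\Diamond\}$ and $(B\setminus A)^\Box=\{g\in G\mid B\setminus A\subseteq g^\Box\}$ do reduce (2)--(5) to a single statement, and both directions of $(1)\Leftrightarrow(5)$ are handled properly, including the one delicate point you flag: the falsifying completion sets $(g,a)$ to $\x$ for $a\in A$ and $(g,b_0)$ to $\o$, which is consistent precisely because $b_0\in B\setminus A$.
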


  \begin{lem}[see \cite{holzer2004knowledgeP1} Lemma
    6] \label{lem:equivalence-evaluation-attribute-implications-2}
    Let $\context = \GMWI$ be an incomplete context and
    $A,B \subseteq M$. Then the following conditions are equivalent:
    \begin{enumerate}
    \item $A\implies B \in \Sat[\context]$
    \item $B \subseteq A^{\Box\Diamond}$
    \item $A^\Box \subseteq B^\Diamond$
    \item For all $g\in G$ with $A\subseteq g^\Box$ we have
      $B\subseteq g^\Diamond$
    \item For all $g\in G$ holds: If $I(g,a)=\x$ for all $a\in A$ then
      $I(g,b)\neq \o$ for all $b\in B$.
    \end{enumerate}
  \end{lem}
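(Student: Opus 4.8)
The plan is to keep the three ``combinatorial'' conditions $(2)$--$(5)$ together, since they differ only by unfolding definitions and one application of a Galois adjunction, and to treat the equivalence of the semantic condition $(1)$ with them as the real content. Concretely, I would first establish the block $(2)\Leftrightarrow(3)\Leftrightarrow(4)\Leftrightarrow(5)$, and then close the loop by proving $(1)\Leftrightarrow(5)$ directly.

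For the combinatorial block, $(4)$ and $(5)$ are literally the same statement, since $A\subseteq g^\Box$ means $I(g,a)=\x$ for all $a\in A$ and $B\subseteq g^\Diamond$ means $I(g,b)\neq\o$ for all $b\in B$. For $(3)\Leftrightarrow(4)$, I would rewrite $A^\Box\subseteq B^\Diamond$ objectwise using $g\in A^\Box \Leftrightarrow A\subseteq g^\Box$ and $g\in B^\Diamond \Leftrightarrow B\subseteq g^\Diamond$; this turns the set inclusion into exactly the quantified statement $(4)$. For $(2)\Leftrightarrow(3)$, the key observation is that $\cdot^\Diamond$ is the pair of derivation operators of the ordinary formal context $(G,M,R)$ whose incidence is the \emph{possible} incidence $R\coloneqq\{(g,m)\mid I(g,m)\neq\o\}=\relI^\x\cup\relI^\q$. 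Hence $\cdot^\Diamond$ between $\powerset{G}$ and $\powerset{M}$ is a Galois connection, and applying its adjunction property to the object set $A^\Box$ and the attribute set $B$ gives $A^\Box\subseteq B^\Diamond \Leftrightarrow B\subseteq (A^\Box)^\Diamond = A^{\Box\Diamond}$, which is precisely $(3)\Leftrightarrow(2)$.

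It remains to show $(1)\Leftrightarrow(5)$. For $(1)\Rightarrow(5)$ I would fix a completion $\tilde\context\geq\context$ in which $A\implies B$ is valid and take any $g$ with $I(g,a)=\x$ for all $a\in A$. Since $\context\leq\tilde\context$ preserves $\x$-entries, $g$ has all of $A$ in $\tilde\context$, so validity forces $g$ to have all of $B$ there; and as $\context\leq\tilde\context$ also preserves $\o$-entries, an entry $I(g,b)=\o$ would survive into $\tilde\context$ and contradict this, so $I(g,b)\neq\o$ for every $b\in B$, which is $(5)$.

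The main obstacle is the converse $(5)\Rightarrow(1)$, where I must \emph{construct} a validating completion. Assuming $(5)$, I would define a completion $\tilde I$ by resolving each ``$\q$'' entry as follows: set $\tilde I(g,m)=\x$ whenever $g\in A^\Box$ and $m\in B$, and set every other ``$\q$'' to $\o$. The verification splits on whether $g\in A^\Box$. If $g\in A^\Box$, then $(5)$ guarantees $I(g,b)\neq\o$ for all $b\in B$, so each such entry is either already $\x$ or a ``$\q$'' promoted to $\x$; thus $g$ has all of $B$ in $\tilde\context$ and the implication is satisfied at $g$. If $g\notin A^\Box$, some $a\in A$ has $I(g,a)\in\{\o,\q\}$; a ``$\q$'' there is resolved to $\o$ by our rule (it is never promoted, since $g\notin A^\Box$), while an ``$\o$'' stays ``$\o$'', so $g$ fails to have all of $A$ in $\tilde\context$ and the implication is vacuously satisfied at $g$. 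The only point requiring care is that the two resolution rules never clash — they cannot, since promotion to $\x$ occurs only for $g\in A^\Box$, whereas the forced $\o$ on an $A$-attribute occurs only for $g\notin A^\Box$ — so $\tilde I$ is well defined and is genuinely a completion of $\context$. Hence $A\implies B$ is valid in $\tilde\context$, establishing $(1)$.
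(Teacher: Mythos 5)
Your proof is correct. Note, however, that the paper does not prove this lemma at all: it is imported verbatim from Holzer (cited as Lemma~6 of \cite{holzer2004knowledgeP1}) and used as a black box, so there is no in-paper argument to compare against. Judged on its own, your argument is sound and self-contained. The combinatorial block $(2)\Leftrightarrow(3)\Leftrightarrow(4)\Leftrightarrow(5)$ is handled exactly as one would hope: $(4)$ and $(5)$ are definitional restatements of each other, $(3)\Leftrightarrow(4)$ is the objectwise unfolding of the inclusion $A^\Box\subseteq B^\Diamond$, and $(2)\Leftrightarrow(3)$ follows from the Galois adjunction of the $\cdot^\Diamond$ operators, which are the ordinary derivation operators of the context with incidence $\relI^\x\cup\relI^\q$. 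The real content is $(5)\Rightarrow(1)$, and your explicit completion --- promote $I(g,m)=\q$ to $\x$ exactly when $g\in A^\Box$ and $m\in B$, and resolve every other $\q$ to $\o$ --- is well defined and does validate $A\implies B$: for $g\in A^\Box$ hypothesis $(5)$ rules out any $\o$ among the $B$-entries, so all of $B$ becomes $\x$; for $g\notin A^\Box$ the offending $A$-attribute is either already $\o$ or gets resolved to $\o$, so the implication holds vacuously. One small remark: the worry about the two resolution rules ``clashing'' is moot, since they are applied to disjoint sets of $\q$-entries by construction; and the case $m\in A\cap B$ causes no trouble because for $g\in A^\Box$ no $A$-entry of $g$ is a $\q$ in the first place.
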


  Further the operator $\cdot^{\Box\Diamond}$ is useful to compute the
  maximal satisfiable conclusion for a premise.
  \begin{lem}[see \cite{holzer2004knowledgeP1} Corollary 6]
    Let $\context = \GMWI$ be an incomplete context and
    $A\subseteq M$. Then
    $$ A^{\Box\Diamond} \coloneqq \{m\in M \mid A\implies m \in \Sat[\context]\}.$$ 
  \end{lem}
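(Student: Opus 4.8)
The plan is to obtain this as an immediate corollary of \cref{lem:equivalence-evaluation-attribute-implications-2} by specializing the conclusion to a singleton. First I would unfold both sides purely in terms of the incidence function. By definition of the certain extent, $A^\Box = \{g \in G \mid I(g,a)=\x \text{ for all } a\in A\}$, and by definition of the possible intent applied to this object set, $A^{\Box\Diamond} = (A^\Box)^\Diamond = \{m\in M \mid I(g,m)\neq \o \text{ for all } g\in A^\Box\}$. Hence $m \in A^{\Box\Diamond}$ holds precisely when every object that certainly has all attributes of $A$ satisfies $I(g,m)\neq \o$.

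Next I would rewrite the right-hand side using the preceding lemma with $B=\{m\}$. Condition (4) of \cref{lem:equivalence-evaluation-attribute-implications-2} states that $A\implies m \in \Sat[\context]$ if and only if for all $g\in G$ with $A\subseteq g^\Box$ we have $m\in g^\Diamond$. Now $A\subseteq g^\Box$ is equivalent to $g\in A^\Box$ (the object $g$ certainly carries all of $A$), and $m\in g^\Diamond$ is equivalent to $I(g,m)\neq \o$. Substituting these, the membership condition becomes: for all $g\in A^\Box$ we have $I(g,m)\neq \o$, which is exactly the unfolded characterization of $m\in A^{\Box\Diamond}$ derived above. The two sets therefore coincide.

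The only care needed is to track the adjunction-like equivalence $g\in A^\Box \iff A\subseteq g^\Box$, which lets one pass between the object view and the attribute view of the certain operator; once this is noted, the chain of equivalences is transparent and there is no genuine obstacle. As a cross-check I would also verify the statement via condition (3) of \cref{lem:equivalence-evaluation-attribute-implications-2}, namely $A^\Box \subseteq \{m\}^\Diamond$: this says the certain extent of $A$ lies inside the possible extent of $m$, which is literally the assertion $m\in (A^\Box)^\Diamond$, yielding the same equality. I would present the singleton specialization explicitly so that the reduction to the already-proved lemma is clear.
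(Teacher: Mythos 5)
Your proposal is correct. The paper itself gives no proof of this statement --- it is imported verbatim as Corollary~6 of \cite{holzer2004knowledgeP1} --- so there is nothing to match against; but your reduction to \cref{lem:equivalence-evaluation-attribute-implications-2} with $B=\{m\}$ is exactly the right way to see it, and every step checks out: $A\subseteq g^\Box \iff g\in A^\Box$ and $m\in g^\Diamond \iff I(g,m)\neq\o$ are immediate from the definitions of the certain and possible derivation operators, and condition~(4) then unfolds to precisely the membership condition for $(A^\Box)^\Diamond$. The only remark worth making is that you take a slightly longer path than necessary: condition~(2) of that lemma, $B\subseteq A^{\Box\Diamond}$, specialized to $B=\{m\}$ reads $m\in A^{\Box\Diamond}$ directly, so the equivalence $A\implies m\in\Sat[\context]\iff m\in A^{\Box\Diamond}$ is literally an instance of (1)$\Leftrightarrow$(2) with no unfolding of incidences required. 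Your route via (4), and the cross-check via (3), are both sound and arguably more illuminating, since they expose the adjunction $g\in A^\Box\iff A\subseteq g^\Box$ that underlies the whole derivation calculus; but the one-line specialization of (2) would suffice.
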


  \subsection{Attribute Exploration}
  \label{subsec:attribute_exploration}
%
%
%
  Let $\context^{U}=(G^U,M,I^U)$, $|M|<\infty$, be an (unknown) formal
  context called universe.  This context represents the knowledge
  domain of interest.  (We assume that the domain can be represented
  in such a way).  The so called \emph{attribute exploration} is an
  interactive algorithm that helps an expert gain maximum insight into
  the dependency structure of the domains attributes.

  The following algorithm is taken from \cite{holzer2001dissertation}
  and condensed to the main steps.  It describes the process of
  exploring a knowledge domain modelled as an unknown formal context
  $\context^U$ using the knowledge of an expert in an algorithmic
  fashion.  Here we assume that the expert's answers are always
  consistent with the domain, i.e., an accepted implication is valid
  in $\context^U$ and given counterexamples are objects of the domain
  contradicting the implication in question.  The attribute
  exploration produces a set of valid implications in the universe
  $\context^U$ and a list of counterexamples against non-valid
  implications.  The following algorithm describes the exploration:

  \begin{itemize}
  \item[(E1)] At the beginning of the exploration algorithm the user
    enters the (finite) set of attributes $M$ whose dependencies are
    to be explored.
    
  \item[(E2)] Let $j\coloneqq 1$.  The set of accepted implications is
    initialized with the empty set $P_1 \coloneqq \emptyset$.  The
    context of examples is initialized with an empty incomplete
    context $\context_1 \coloneqq (G_1=\emptyset,M,\values,I_1)$ with
    $I_1: \emptyset \times M \rightarrow \values$.

  \item[(E3)] The set $P_j$ contains the implications accepted as
    valid so far.  In the $j$-th step the algorithm chooses an
    implication $A\implies B$ that might be valid in $\context^U$,
    such that the set $A\subseteq M$ is minimal (w.r.t. $\subseteq$),
    respects $P_j$ and
    $B:=A^{\Box\Diamond}=\{m\in M \mid A\implies m \in
    \Sat[\context_j]\}\neq A$.  If the implication is derivable from
    $P_j$ it is accepted automatically.  Otherwise the program asks
    the expert whether $A\implies B$ is valid in the universe
    $\context^U$.  The expert can answer \emph{yes}, \emph{no} or
    \emph{unknown}:
  
    \begin{itemize}
    \item[(yes)] The implication $A\implies B$ is accepted as valid
      and added to the set of accepted implications:
      $P_{j+1} \coloneqq P_j \cup \{A\implies B\}$.  Let
      $\context_{j+1} = \context_j$.
        
    \item[(no)] The expert must give at least one counterexample
      $g\in G^U$ against the implication $A\implies B$.  For each
      counterexample she enters the context row of $g$ which may
      contain question marks, i.e., unknown relations between $g$ and
      some attributes.  Let $P_{j+1}\coloneqq P_j$ and
      $\context_{j+1}$ be the context $\context_j$ after adding the
      rows of all counterexamples $g$.
        
    \item[(unknown)] The expert is asked for which attributes $b\in B$
      the implication $A\implies b$ is unknown.  Let
      $Z\coloneqq \{b\in B \mid A\implies b \text{ is unknown}\}$.
      For $b\in B\setminus Z$ the implication $A\implies b$ is valid
      in the universe $\context^U$, because every counterexample
      against $A\rightarrow b$ would be a counterexample against
      $A\implies B$.
        
      For $b\in Z$ the algorithm checks whether
      $A\implies b \in \Cons{P_j \cup \{A\implies B\setminus Z\}}$
      holds.  In case it holds $b$ can be removed from $Z$, since
      $A\implies b$ follows from implications known to be valid in the
      universe $\context^U$ and must therefore also be valid.  In case
      it does not hold for $b$, i.e.,
      $A\implies b \notin \Cons{P_j \cup \{A\implies B\setminus Z\}}$,
      fictitious objects are added to $\context_{j}$.
        
      For each remaining $b\in Z$ we add the ficticious object
      $g^?_{A\not \implies b}$ that contradicts the implication
      $A\implies b$, i.e., $g^?_{A\not \implies b}$ has all attributes
      in $A$, does not have the attribute $b$ and the relation to all
      other attributes is unknown.  We assume that
      $g^?_{A\not \implies b}$ is a new object, i.e.,
      $ g^?_{A\not \implies b} \notin G^U$ and
      $ g^?_{A\not \implies b} \notin G_j$.
        
      Let
      $\context_{j+1} \coloneqq (G_j \cup \{ g^?_{A\not \implies b}\mid
      b\in Z\},M,\values,J)$ with $J(g,m) = I_j(g,m)$ for all
      $g \in G_j$, $m\in M$ and for $b\in Z$ let
      $J( g^?_{A\not \implies b},a) = \x$ for $a\in A$,
      $J( g^?_{A\not \implies b}, b)=\o$ and
      $J( g^?_{A\not \implies b},m)=\q $ for
      $m\in M\setminus (A\cup \{b\})$.

      Let $P_{j+1}\coloneqq P_j \cup \{A \implies B\setminus Z\}$ if
      $B\setminus Z \neq A$ and $P_{j+1} \coloneqq P_j$ if
      $B\setminus Z = A$.
    \end{itemize}
    
  \item[(E4)]
%
    If every set $A$ that respects $P_j$ and is not already a premise
    in $P_j$ satisfies
    $A=\{m\in M \mid A\implies m \in \Sat[\context_j]\}$ the algorithm
    ends.  Otherwise increment $j$, i.e., let $j\coloneqq j+1$, and
    repeat the steps (E3) and (E4).
    
  \end{itemize}


  This is a stripped-down version of attribute exploration for
  incomplete knowledge (without handling of background knowledge and
  reductions of question marks), because most of the exploration
  procedure itself is of no particular interest for the remainder of
  this paper.
  An example of how this algorithm works in detail is beyond the scope
  of this paper and we refer the reader to
  \cite{conf/iccs/BurmeisterH00} and
  \cite{holzer2001dissertation}.

  There already exists theory on more advanced techniques such as the
  use of background knowledge and reductions of question marks based
  on already accepted implications.  For exploration with incomplete
  knowledge more information can, for example, be found in
  \cite{conf/iccs/BurmeisterH00,holzer2001dissertation,holzer2004knowledgeP1}.
  Other modifications such as allowing exceptions to attribute
  implications \cite{stumme96attribute} and background knowledge in
  the form of implications and clauses
  \cite{stumme96attribute,ganter1999attribute,ganter2016conceptual}
  seem adaptable to attribute exploration with incomplete knowledge as
  well.

  \begin{fact}[cf. \cite{holzer2004knowledgeP1,holzer2004knowledgeP2,conf/iccs/BurmeisterH00}]
    \label{fact:attribute_exploration_results_in_maximum_knowledge_w_r_t_expert_knowledge}
    At the end of the attribute exploration the result contains
    maximal information (with respect to the expert's knowledge) about
    implications of the unknown universe $\context^U$.  Assuming the
    exploration ended after $j$ steps, the result consists of
    \begin{enumerate}[topsep=0pt]
    \item a list of implications $P_j$ that are known to be valid,
    \item a list of fictitious counterexamples
      $G^* \coloneqq G_j \setminus G^U$ contradicting implications
      where the expert answered `unknown',
    \item a list of counterexamples $G_j\setminus G^*$ contradicting
      the implications which are known not be valid and
    \item a list of implications
      $P_j \cup \{A\implies b \mid g^?_{A\not \implies b} \in G^*\}$
      which are possibly valid.
    \end{enumerate}
    To obtain the complete knowledge about the domain it now suffices
    to check all implications that were answered by `unknown' before.
    If for each of these implications it can be decided whether they
    are valid or have to be rejected, complete knowledge about the
    domain is received:
    
    An implication is valid in $\context^U$ if and only if it is
    derivable from the implications accepted as valid and implications
    rejected as `unknown' that in fact are valid in $\context^U$.
  \end{fact}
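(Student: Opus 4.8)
This statement is the soundness-and-completeness guarantee for the incomplete-knowledge attribute exploration of (E1)--(E4). The plan is to read items 1--4 directly off the bookkeeping of the algorithm, to prove soundness of the accepted implications \emph{with respect to the true universe} $\context^U$ by induction on $j$, and then to establish the final biconditional, whose forward direction (completeness) is the real work. Throughout I would reduce every implication to single-attribute conclusions and reason about the closed premises produced by the canonical enumeration in (E3).

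Items 1--4 are immediate by construction: $P_j$ collects exactly the implications accepted in the (yes) branch together with the certified parts $A\implies B\setminus Z$ of the (unknown) branch; the objects entered in the (no) branch are genuine rows of $\context^U$, whereas each $g^?_{A\not\implies b}$ created in the (unknown) branch is stipulated to be new, so $G^*=G_j\setminus G^U$ is precisely the set of fictitious counterexamples. For soundness I would induct on $j$: $P_1=\emptyset$ is trivial; the (yes) branch adds an implication the expert certified in $\context^U$; and in the (unknown) branch every $b\in B\setminus Z$ satisfies that $A\implies b$ holds in $\context^U$, since any counterexample to it would already refute $A\implies B$, which the expert did \emph{not} do. Hence $\Cons{P_j}\subseteq\Imp[\context^U]$ at every stage. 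Note this is soundness against $\context^U$ and not against $\context_j$: the question marks in the fictitious rows mean $\context_j$ itself need not certify $P_j$.

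For the biconditional I would first settle termination and the structural consequence of the stopping rule. Since $M$ is finite and each execution of (E3) either enlarges $P_j$ (turning the current premise $A$ into a non-respecting set, so it is never revisited) or strictly shrinks some $A^{\Box\Diamond}$ towards $A$, the canonical enumeration presents each relevant closed premise to the expert at most once and the process halts. At termination, condition (E4) says that every set $A$ that respects $P_j$ and is not already a premise is \emph{possibly closed}, i.e. $A^{\Box\Diamond}=A$ in $\context_j$. By \cref{lem:equivalence-evaluation-attribute-implications-2} this means that for each $m\notin A$ the implication $A\implies m$ is not satisfiable in $\context_j$, so some object of $\context_j$ certainly has all of $A$ yet certainly lacks $m$. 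This yields the crucial dichotomy: such a witness is either a real object, in which case $A\implies m$ is \emph{invalid} in $\context^U$, or a fictitious $g^?_{A\not\implies m}\in G^*$, in which case $A\implies m$ is exactly one of the implications flagged ``unknown'' in item 4.

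It remains to prove the biconditional, and here the backward direction is routine while the forward one carries the difficulty. If an implication is derivable from the accepted-valid implications together with the ``unknown''-but-actually-valid ones, then it lies in $\Cons{\mathcal{L}}$ for a set $\mathcal{L}\subseteq\Imp[\context^U]$, hence in $\Cons{\Imp[\context^U]}=\Imp[\context^U]$ by the closure theorem, so it is valid. Conversely, let $C\implies m$ be valid in $\context^U$ and put $\overline{C}\coloneqq\langle C\rangle_{P_j}$, which is $P_j$-closed and therefore not a premise, so the stopping rule forces $\overline{C}^{\Box\Diamond}=\overline{C}$. If $m\in\overline{C}$ we are done; otherwise $\overline{C}\implies m$ is valid in $\context^U$ (as $\overline{C}\supseteq C$), so its $\context_j$-witness cannot be real and must be a fictitious $g^?_{A\not\implies m}$ with $A\supseteq\overline{C}$, giving a flagged, genuinely valid implication $A\implies m$. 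The main obstacle is precisely to turn this into a derivation of $C\implies m$ from $\mathcal{L}$: one must argue that the premise $A$ of the witnessing fictitious object is reachable from $C$ inside $\Cons{\mathcal{L}}$ --- i.e. that the canonical enumeration guarantees the implication actually \emph{asked} about $\overline{C}$ (rather than some larger premise) is the one whose conclusion was flagged --- after which the Armstrong rules close the gap. Making this premise-matching rigorous, and confirming that no valid implication escapes the enumeration unasked, is the heart of the completeness argument.
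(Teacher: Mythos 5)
First, a point of reference: the paper does not prove this statement at all --- it is stated as a \emph{Fact} with a citation to Holzer and Burmeister, so there is no in-paper proof to compare against. Measured against the standard argument in those references, your reconstruction is correct and essentially canonical wherever it is actually carried out: items 1--4 do read off the bookkeeping, the induction giving $\Cons{P_j}\subseteq\Imp[\context^U]$ is right, the termination sketch is adequate, and the dichotomy you extract from the stopping rule via \cref{lem:equivalence-evaluation-attribute-implications-2} (every witness against an unsatisfiable $\overline{C}\implies m$ in $\context_j$ is either a genuine object, refuting validity, or a fictitious $g^?_{A'\not\implies m}$) is exactly the right tool. The backward direction of the biconditional is also fine.

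The genuine gap is the one you name yourself and then leave open: the forward (completeness) direction, which is the entire point of the Fact, stops at ``one must argue that the premise $A'$ of the witnessing fictitious object is reachable from $C$.'' As written this establishes only soundness. The step is closable, and the ingredients are already on your page: take $\overline{C}\coloneqq\langle C\rangle_{\mathcal{L}^*}$ where $\mathcal{L}^*\coloneqq P_j\cup\{A\implies b\mid g^?_{A\not\implies b}\in G^*,\ A\implies b\in\Imp[\context^U]\}$ (closing under the \emph{full} target set rather than under $P_j$ alone, which is what lets the final contradiction fire), suppose $m\notin\overline{C}$, and consider the \emph{first} step $j'$ at which a witness against $\overline{C}\implies m$ enters the example context. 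It cannot be real. It is therefore a fictitious $g^?_{A'\not\implies m}$ created while the algorithm processed a premise $A'\supseteq\overline{C}$ chosen \emph{minimal} among sets respecting $P_{j'}$ with $A'^{\Box\Diamond}\neq A'$ in $\context_{j'}$. If $\overline{C}\subsetneq A'$, then since $\overline{C}$ also respects $P_{j'}\subseteq P_j$, minimality forces $\overline{C}^{\Box\Diamond}=\overline{C}$ already in $\context_{j'}$, i.e.\ a witness against $\overline{C}\implies m$ existed \emph{before} step $j'$ --- contradicting firstness. Hence $A'=\overline{C}$, the flagged implication is $\overline{C}\implies m$, it is valid in $\context^U$ (as $\overline{C}\supseteq C$), so it lies in $\mathcal{L}^*$ and forces $m\in\overline{C}$, a contradiction. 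Without this minimality-plus-first-witness argument (or an equivalent), the ``premise-matching'' failure you describe is a real counter-scenario and the biconditional is unproven; with it, your outline becomes a complete proof matching the cited one.
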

  So far we have presented known results.
  We have seen that attribute exploration works with a single (reliable) expert
  who can respond with partial knowledge to questions posed by the
  exploration algorithm.
  \cref{fig:basic_exploration} visualizes this
  part of the exploration process.

\begin{figure}[H]
  \centering
  \begin{tikzpicture}
    \tikzset{
      >=stealth',
      box/.style={ rectangle, rounded corners, draw=black, very thick,
        text width=6.5em, minimum height=1cm, text centered, node
        distance=5.5cm}, label/.style={ text width=8.5em, text
        centered, },
      exp/.style={ ellipse, draw=black, very thick,
        minimum height=2em, text centered, node distance=4cm},
      pil/.style={ ->, thick, shorten <=4pt, shorten >=4pt,}} \node
    [box, draw](exploration){Attribute Exploration}; \node [exp, draw,
    right of=exploration](expert1){Expert};
    
    \path (exploration) edge [pil,bend left] node[label, yshift=0.5cm]
    {Question:\\ $A\implies B$ valid ?} (expert1); \path (expert1)
     edge [pil,bend left] node[label, yshift=-0.5cm] {Answer:\\ yes,
       no, unknown} (exploration);
    
   \end{tikzpicture}
   \caption{Visualization of exploration with one expert}
   \label{fig:basic_exploration}
 \end{figure}

 Let us recap: The algorithm of the attribute exploration under
 incomplete knowledge generates questions that are to be answered by
 the domain expert.  The expert is not omniscient but reliable, i.e.,
 the answers she gives are consistent with the true knowledge in the
 domain.
 The result is dependent on the knowledge of the expert conducting the
 exploration.
  The answers `no' and `unknown' come with additional
 information provided by the expert, i.e., with counterexamples or the
 set of attributes for which is unknown if they follow from the
 premise.  The result of such an attribute exploration is a set of
 attribute implications known to be valid, an incomplete context of
 counterexamples that contradicts implications that are known to be
 invalid in the domain and a set of fictitious counterexamples that
 contradict implications that were unknown to the expert.

 \section{Experts and Collaboration}
  \label{sec:experts_and_collaboration}

 Based on previous works on attribute exploration
 under incomplete knowledge (c.f. \cref{subsec:attribute_exploration})  we now modify the attribute exploration to work with multiple
 experts.  The idea is that instead of a single expert who answers
 the questions directly we have a strategy to answer the questions
 with help of multiple experts -- see \cref{fig:collab_exploration}.
 Once again the answers `no' and `unknown' come with additional
 information.  Namely, some counterexamples and the set of attributes
 $Z\subsetneq B$ for which the implication $A\implies Z$ is unknown and
 $A\implies B\setminus Z$ is valid.  Note that we slightly modify the
 attribute exploration to receive an answer that already contains all
 the additional information expected by the algorithm.

\begin{figure}[H]
  \centering
  \begin{tikzpicture}
    \tikzset{
      >=stealth',
      box/.style={ rectangle, rounded corners, draw=black, very thick,
        text width=6.5em, minimum height=2cm, text centered, node
        distance=5.5cm}, label/.style={ text width=8.5em, text
        centered, },
      exp/.style={ ellipse, draw=black, very thick,
        minimum height=2em, text centered, node distance=4cm},
      pil/.style={ ->, thick, shorten <=4pt, shorten >=4pt,}} \node
    [box, draw](exploration){Attribute Exploration}; \node [box, draw,
    right of=exploration, minimum height= 2cm](strategy){Collaboration
      Strategy}; \node [exp, draw, right
    of=strategy,yshift=1.5cm](expert1){Expert}; \node [exp, draw,
    right of=strategy,yshift=0.5cm](expert2){Expert}; \node [exp,
    draw, right of=strategy,yshift=-0.5cm](expert3){Expert}; \node
    [below of=expert3,yshift=0.25cm](further-experts){\vdots};
    
    \path (exploration) edge [pil,bend left] node[label, yshift=0.5cm]
    {Question:\\ $A\implies B$ valid ?} (strategy); \path (strategy)
     edge [pil,bend left] node[label, yshift=-0.5cm] {Answer:\\ yes,
       no, unknown} (exploration);
    
     \path (strategy.north east) edge [pil,bend left=5] (expert1);
     \path (expert1) edge [pil,bend left=5]
     ($(strategy.north east)-(0,0.2)$);
    
     \path ($(strategy.east)+(0,0.3)$) edge [pil,bend left=5]
     (expert2); \path (expert2) edge [pil,bend left=5]
     ($(strategy.east)+(0,0.1)$);
    
     \path ($(strategy.east)-(0,0.3)$) edge [pil,bend left=5]
     (expert3); \path (expert3) edge [pil,bend left=5]
     ($(strategy.east)-(0,0.5)$);
    
     \draw[thick,dotted] ($(strategy.north west)+(-0.00,1)$) rectangle
     ($(further-experts.south east)+(0.85,0)$)
     node[midway,above=1.8cm] {Expert Collaboration};
    
   \end{tikzpicture}
   \caption{Visualization of attribute exploration with multiple
     experts}
   \label{fig:collab_exploration}
 \end{figure}

 With this picture in mind we begin by formalizing the expert
 .  We
 generalize the information order to work on different object sets to
 be able to compare and combine the knowledge of experts.
%
 We then adapt the idea of a consortium \cite{conf/iccs/HanikaZ18} to
 a group of experts, formulate a notion of collaboration, give a few
 examples and proceed to discuss methods to compare different
 collaboration strategies.

 An example of three experts with incomplete knowledge conducting a collaborative exploration of properties of the
 \emph{Disciplines of the Summer Olympic Games 2020} can be found in the
 appendix (see \cref{example:appendix}).

 \subsection{Expert Knowledge}

 The expert, a key component of every attribute exploration, is often
 only described as an entity that correctly answers the posed
 questions, especially in many of the earlier works on the subject.
 Later works, e.g, \cite{ganter2016conceptual,conf/iccs/HanikaZ18},
 make an effort to also model the expert in a formal way, usually as a
 mapping from the set of attribute implications into the target domain
 represented as closure system over $M$.  In the following we suggest
 a model for experts in an incomplete knowledge setting where we
 encode the knowledge of an expert by both an incomplete context of
 examples and a set of valid implications.
 We then model a notion of interaction with the expert.

 First, we formally introduce the \emph{universe},
 cf. \cref{subsec:attribute_exploration}, which represents complete
 (but unknown) knowledge of the domain of interest, i.e., the domain
 about which the expert has knowledge.  We assume that the universe
 can be represented as a formal context.
 \begin{defn}[universe]
   In the following let $\context^U=\GMI=(G,M,\{\x, \o\},I)$,
   $|M|<\infty$ always be a formal context which we call
   \emph{universe}.  The set $\mathcal{L}\coloneqq \Imp[\context^U]$
   denotes the set of valid implications in the \emph{universe}.
 \end{defn}
 Note that the the universe could equally be defined as a formal or
 incomplete context.  We chose to restrict ourselves to a universe
 represented by a formal context for the sake of readability.

 \begin{defn}[expert knowledge]
   \emph{Expert knowledge} about the universe $\context^U$
    $$E=(\context_E,\Cons{\mathcal{L}_E})$$
    consists of a context $\context_E \coloneqq (G_E,M,\values,I_E)$,
    $G_E \subseteq G$, of objects that are (partially) known to the
    expert, i.e., $\context_E \leq \context^U|_{G_E}$, and a set of
    implications
    $\mathcal{L}_E \subseteq \mathcal{L} \coloneqq \Imp[\context^U]$
    that the expert knows to be valid.
    
  
    Note that we also use the terms \emph{expert for a domain} and
    \emph{expert for a universe} to indicate that an expert has expert
    knowledge about a universe.
  \end{defn}
  By definition the set of partial counterexamples and the set of
  known valid implications are compatible:
  \begin{lem}\label{lem:expert_knowledge_is_compatible}
    Let $E=\expert{E}$ be expert knowledge about a universe
    $\context^U$.  Then every implication in $\Cons{\mathcal{L}_E}$ is
    satisfiable in $\context_{E}$.
  \end{lem}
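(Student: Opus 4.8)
The plan is to route the claim through the universe and the satisfiability characterization of \cref{lem:equivalence-evaluation-attribute-implications-2}. First I would argue that $\Cons{\mathcal{L}_E}$ is contained in $\Imp[\context^U]$. By definition of expert knowledge $\mathcal{L}_E \subseteq \mathcal{L} = \Imp[\context^U]$, so by monotonicity of the consequence operator (any Armstrong derivation from $\mathcal{L}_E$ is still a derivation from the larger set) we get $\Cons{\mathcal{L}_E} \subseteq \Cons{\Imp[\context^U]}$; the theorem stating $\Cons{\Imp[\context]} = \Imp[\context]$ then yields $\Cons{\mathcal{L}_E} \subseteq \Imp[\context^U]$. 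Hence every implication in $\Cons{\mathcal{L}_E}$ is certainly valid in the universe, and since $\context^U$ is a complete formal context this simply means it is valid in $\context^U$ in the usual sense of \cref{def:valid-formula}.

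Next I would fix an arbitrary implication $A \implies B \in \Cons{\mathcal{L}_E}$ and verify the per-object condition~(5) of \cref{lem:equivalence-evaluation-attribute-implications-2} for the context $\context_E$, which is equivalent to $A \implies B \in \Sat[\context_E]$. So let $g \in G_E$ satisfy $I_E(g,a) = \x$ for all $a \in A$. Invoking the first half of the compatibility $\context_E \leq \context^U|_{G_E}$ (the information order on $\values$ gives $I_E(g,m) = \x \Implies I^U(g,m) = \x$), the object $g$ has all attributes of $A$ in $\context^U$; validity of $A \implies B$ in $\context^U$ then forces $I^U(g,b) = \x$ for every $b \in B$. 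It remains to rule out $I_E(g,b) = \o$: were this the case for some $b \in B$, the second half of the compatibility ($I_E(g,m)=\o \Implies I^U(g,m)=\o$) would give $I^U(g,b) = \o$, contradicting $I^U(g,b) = \x$. Thus $I_E(g,b) \neq \o$ for all $b \in B$, which is exactly condition~(5), so $A \implies B \in \Sat[\context_E]$.

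The only real work is the careful bookkeeping with the compatibility relation $\context_E \leq \context^U|_{G_E}$: one must use its $\x$-preserving half to lift the premise from $\context_E$ up to the universe, and its $\o$-preserving half to exclude a forbidden $\o$ in the conclusion. Nothing deeper is needed, since both the reduction to the universe and the satisfiability test are already packaged in earlier results.
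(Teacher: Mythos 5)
Your proof is correct, and its first half coincides exactly with the paper's: both reduce the claim to validity in the universe via $\Cons{\mathcal{L}_E}\subseteq\Cons{\mathcal{L}}=\mathcal{L}=\Imp[\context^U]$. The two arguments part ways only in the final step. The paper observes that $\context^U|_{G_E}$ is itself a completion of $\context_E$ (since $\context_E\leq\context^U|_{G_E}$ and $\context^U|_{G_E}$ is a complete context), so validity of $A\implies B$ in that one completion gives satisfiability in $\context_E$ immediately from the definition of \emph{satisfiable}. You instead verify the pointwise criterion (condition~(5) of \cref{lem:equivalence-evaluation-attribute-implications-2}), lifting the premise through the $\x$-preserving half of the information order and excluding a forbidden $\o$ through the $\o$-preserving half. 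Both routes are sound; the paper's is shorter because it never has to unpack the incidence function, while yours makes the role of the two halves of the compatibility condition $\context_E\leq\context^U|_{G_E}$ explicit and yields the computationally checkable form of satisfiability directly. No gap either way.
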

  \begin{proof}
    Let $A\implies B \in\Cons{\mathcal{L}_E}$.  Since
    $\mathcal{L}_E \subseteq \mathcal{L}$ and
    $\Cons{\mathcal{L}}=\mathcal{L}$ we know that $A\implies B$ is
    valid in $\context^{U}$.  Therefore $\context^U$ does not contain
    any counterexamples for $A\implies B$ and $A\implies B$ is valid
    in every subcontext $(T,M,I|_{T\times M})$ with $T \subseteq G$ of
    $\context^U$.  Since $\context_{E} \leq \context^U|_{G_E}$ and
    $A\implies B$ is valid in $\context^U|_{G_E}$ we have that
    $A\implies B$ is satisfiable in $\context_{E}$.
  \end{proof}

  One might be tempted to use the example knowledge to infer further
  implications, but much like in reality this is not justified.  The
  certainly valid implications of the expert's example context are not
  necessarily valid in the universe.  Furthermore, the valid
  implications known to the expert are not necessarily certainly valid
  with regard to the set of partial counterexamples known to the
  expert as shown in
  \cref{remark:validity_implications_examples_expert}.
  \begin{rem}
    \label{remark:validity_implications_examples_expert}
    Assume that we have an expert $\expert{E}$ for the universe
    $\context^U$ then
    the implications in $\mathcal{L}_E$ are not necessarily certainly
    valid in $\context_{E}$.  Consider for instance the following
    example (cf. \cref{fig:formal_context_examples}):
    \begin{center}
      \begin{cxt}
        \cxtName{$\context^U$} \cxtNichtKreuz{}
        \att{a} \att{b} \att{c}
        \obj{xxx}{Aquatics -- Swimming} \obj{.xx}{Badminton}
        \obj{..x}{Gymnastics -- Rhythmic}
      \end{cxt}
      \begin{cxt}
        \cxtName{$\context_E$} \cxtNichtKreuz{} \att{a} \att{b}
        \att{c} \obj{?.?}{Gymnastics -- Rhythmic}
      \end{cxt}
%
%
%
%
    \end{center}
    Let
    $\{a\implies c\} = \mathcal{L}_E \subseteq \mathcal{L}=
    \Imp[\context^U]$, then $a\implies c$ is satisfiable but not
    certainly valid in $\context_{E}$.  Further, the implication
    $b\implies a$ is certainly valid in $\context_E$ but not valid in
    $\context^U$.
  \end{rem}

\begin{rem}
  In our setting (and in contrast to \cite{conf/iccs/HanikaZ18}),
  multiple experts for a universe refer to the same objects with the
  same names.  By definition there is no confusion about objects.
  Different experts for a universe can know about different objects or
  different attribute object relations.  However, they can not have
  conflicting knowledge,
  cf. \cref{lem:expert_knowledge_of_a_group_of_experts_is_compatible}.
\end{rem}

\subsection{Generalized Information Order}

We want to compare and combine the knowledge of different experts.  To
achieve this we need to be able to compare both the known examples and
the known implications.  The known implications can easily be compared
using the set inclusion order $\subseteq$.  However, to compare the
known examples of different experts we need to generalize the
information order to allow comparing contexts with different object
sets.

\begin{defn}[generalized information order]\label{defn:generalized-information-order}
  Given two incomplete contexts $\context_{1}=(G_1,M,\values,I_1)$ and
  $\context_{2}=(G_2,M,\values, I_2)$ on object sets
  $G_1,G_2 \subseteq G$, we say that $\context_{2}$ \emph{contains at
    least as much information as} $\context_{1}$, abbreviated
  $\context_{1} \leq_g \context_{2}$, if
    $$G_1 \subseteq G_2 \tand \context_{1} \leq \context_{2}|_{G_1}.$$
    where $\leq$ is the information order on incomplete contexts, see
    \cref{def:information_order}.
  \end{defn}
  Obviously we have $\leq_g = \leq$ if we compare incomplete contexts
  that have the same object and attribute sets, therefore we use
  \emph{contains at least as much information} in both cases.

  The infimum of two incomplete contexts on the same attribute set
  with respect to the generalized information order always exists:
  \begin{defn}[generalized information infimum]
    Given two incomplete contexts $\context_{1}=\GMWII[1]$ and
    $\context_{2}=\GMWII[2]$ on object sets $G_1,G_2 \subseteq G$, the
    \emph{generalized information infimum}
    $\context_{1} \land_g \context_{2} $ is defined by
    \[
      \context_{1} \land_g \context_{2} \coloneqq (G_1 \cap G_2,
      M,\values, I_1 \land I_2)
    \]
    where
    \[
      I_1 \land I_2:(G_1 \cap G_2)\times M \rightarrow \values
    \]
    with $(I_1 \land I_2)(g,m) = I_1(g,m)\land I_2(g,m)$ defined as
    before, see \cref{cor:infimum-supremum-context}
    \labelcref{cor:infimum_context}.
  \end{defn}

  Again, for incomplete contexts on the same attribute set with no
  conflicting information there exists a supremum with respect to the
  generalized information order:
  \begin{defn}[generalized information supremum]
    Given two incomplete contexts $\context_{1}=(G_1,M,\values,I_1)$
    and $\context_{2}=(G_2,M,\values, I_2)$ on object sets
    $G_1,G_2 \subseteq G$, with no conflicting information the
    \emph{generalized information supremum}
    $\context_{1} \lor_g \context_{2}$ is defined by
    \[
      \context_{1} \lor_g \context_{2} \coloneqq (G_1 \cup G_2,
      M,\values, I_1 \lor I_2)
    \]
    where
    \[
      I_1 \lor I_2:(G_1 \cup G_2)\times M \rightarrow \values
    \]
    with $(I_1 \lor I_2)(g,m) = I_1(g,m)\lor I_2(g,m)$ defined as
    before, see \cref{cor:infimum-supremum-context}
    \labelcref{cor:supremum_context}, with the addition that we extend
    the domains of $I_1$ and $I_2$ to $G_1\cup G_2$, each by mapping
    previously undefined object-attribute combinations to
    $\q$.
  \end{defn}

  The following lemma and corollaries allow us to compare and combine
  example knowledge.
  \begin{lem}
    \label{lem:lattices_are_the_same}
    The set $S$ of all incomplete contexts that are derived from an
    incomplete context $\context^U=\GMWI$,
    $S := \{\context \mid \context \leq_g \context^U\}$, ordered by
    the generalized information order constitutes a bounded lattice
    where $\land_g$ and $\lor_g$ define the infimum and supremum.
  \end{lem}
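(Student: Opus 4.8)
The plan is to establish, in this order, that $\leq_g$ is a partial order on $S$, that any two elements of $S$ have a $\land_g$- and a $\lor_g$-combination that again lies in $S$, that these combinations are the infimum and supremum, and finally that $S$ has a least and a greatest element. Reflexivity and antisymmetry of $\leq_g$ follow at once from the corresponding properties of $\subseteq$ on object sets and of the information order $\leq$ on a fixed object set. For transitivity I would exploit that restriction is monotone: from $\context_1 \leq_g \context_2 \leq_g \context_3$ one gets $G_1 \subseteq G_2 \subseteq G_3$, and restricting $\context_2 \leq \context_3|_{G_2}$ further to $G_1$ yields $\context_2|_{G_1} \leq \context_3|_{G_1}$, so that $\context_1 \leq \context_2|_{G_1} \leq \context_3|_{G_1}$ by transitivity of $\leq$ on the fixed object set $G_1$.

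The crucial step is the observation that no two contexts in $S$ carry conflicting information, so that $\lor_g$ is always defined on $S$. Indeed, if $\context_1,\context_2 \leq_g \context^U$ and some pair $(g,m)$ had $I_1(g,m)=\x$ while $I_2(g,m)=\o$, then $g \in G_1 \cap G_2$, and the defining conditions of $\leq_g$ (via $\leq$ on the restrictions to $G_1$ and to $G_2$) would force the incidence of $\context^U$ at $(g,m)$ to equal both $\x$ and $\o$, which is impossible. With existence secured, I would verify that $\land_g$ and $\lor_g$ compute infimum and supremum by reducing everything to the value level: for $g$ in the relevant object set, $(I_1 \land I_2)(g,m)$ is the meet and $(I_1 \lor I_2)(g,m)$ the join of $I_1(g,m),I_2(g,m)$ in the information order on $\values$, with objects absent from one context contributing $\q$ (the least value). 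A lower bound $\context_3$ of $\context_1,\context_2$ satisfies $G_3 \subseteq G_1 \cap G_2$ and $I_3(g,m)\leq I_i(g,m)$, hence $I_3(g,m) \leq (I_1\land I_2)(g,m)$; dually, every upper bound dominates $\context_1\lor_g\context_2$ componentwise on $G_1 \cup G_2$. Carried out against the restrictions prescribed by $\leq_g$, these componentwise comparisons give that $\context_1\land_g\context_2$ and $\context_1\lor_g\context_2$ are the greatest lower and least upper bounds.

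Finally I would check closure and boundedness. Closure under $\land_g$ is free: $\context_1\land_g\context_2 \leq_g \context_1 \leq_g \context^U$ by transitivity, so it lies in $S$. For $\lor_g$ I would use that $\context^U$ is itself an upper bound of $\context_1$ and $\context_2$ (both being elements of $S$), whence the least-upper-bound property just proved gives $\context_1\lor_g\context_2 \leq_g \context^U$, so it too lies in $S$. The greatest element of $S$ is $\context^U$, and the least element is the empty context $(\emptyset,M,\values,\cdot)$, which is $\leq_g$ every element vacuously. I expect the main obstacle to be purely bookkeeping rather than any genuinely hard inequality: keeping the differing object sets straight when restrictions are composed, in particular verifying transitivity of $\leq_g$ and checking the $\lor_g$ comparison separately on the three regions $G_1\setminus G_2$, $G_2\setminus G_1$ and $G_1\cap G_2$.
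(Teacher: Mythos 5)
Your proposal is correct, and it reaches the conclusion by a slightly different route than the paper. The shared crux is identical: both arguments hinge on the observation that any two members of $S$ are conflict-free because each is bounded above by the restriction of the same $\context^U$, so $\lor_g$ is always defined, and both identify the empty context and $\context^U$ as the bounds. Where you diverge is in how you certify that $\land_g$ and $\lor_g$ really are the infimum and supremum: the paper proves the order--algebra compatibility equivalences $\context_1 \leq_g \context_2 \Leftrightarrow \context_1 \land_g \context_2 = \context_1$ and $\context_1 \leq_g \context_2 \Leftrightarrow \context_1 \lor_g \context_2 = \context_2$, i.e.\ it shows that the order induced by the semilattice operations agrees with $\leq_g$, whereas you verify the universal properties directly, componentwise at the value level (lower bounds have object set inside $G_1\cap G_2$ and values below the meet; upper bounds dominate the join on the three regions $G_1\setminus G_2$, $G_2\setminus G_1$, $G_1\cap G_2$). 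Your route is more self-contained: it makes explicit the partial-order axioms for $\leq_g$ and the closure of $S$ under both operations (closure under $\lor_g$ via $\context^U$ being an upper bound), points the paper leaves implicit; the paper's equivalence argument is shorter but tacitly relies on $\land_g$ and $\lor_g$ being idempotent, commutative and associative so that the absorption-style identities indeed characterize infimum and supremum. Both are complete modulo such routine bookkeeping.
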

  \begin{proof}
    The infimum of any two contexts from $S$ exists and is the infimum
    on the restrictions of the contexts to the set of shared objects,
    cf. \cref{cor:infimum-supremum-context}
    \labelcref{cor:infimum_context}.  There is no conflicting
    information for any two contexts in $S$.  This directly follows
    from the definition of $S$ where every context contains partial
    information of the same context $\context^U$.  Therefore the
    supremum on the set of shared objects always exists,
    cf. \cref{cor:infimum-supremum-context}
    \labelcref{cor:supremum_context}, which can be extended to the set
    of combined objects by using the corresponding valuations of any
    object-attribute pair where the object only appears in exactly one
    of the contexts.
    Further the incomplete context that contains no objects
    $\{\emptyset,M,\values,I_\emptyset\}$ is in $S$ and is the lower
    bound for all infima of incomplete contexts in $S$.  Equally the
    context $\context^U$ is the upper bound for all suprema of
    incomplete contexts in $S$.
    
    Now we show that for $\context_{1},\context_{2}\in S$ where
    $\context_{1}=\GMWII[1]$ and $\context_{2}=\GMWII[2]$ that
    \begin{enumerate}[label=\arabic*)]
    \item
      $\context_{1} \leq_g \context_{2} \Leftrightarrow \context_{1}
      \land_g \context_{2} = \context_{1}$ and
    \item
      $\context_{1} \leq_g \context_{2} \Leftrightarrow \context_{1}
      \lor_g \context_{2} = \context_{2}$.
    \end{enumerate}
    which shows that the infimum and supremum in $S$ with respect to
    $\leq_g$ coincide with $\land_g$ and $\lor_g$.
    
    1) `$\Rightarrow$': Let $\context_{1} \leq_g \context_{2}$.  Then
    $G_1 \subseteq G_2$ and $\context_{1} \leq \context_{2}|_{G_1}$.
    Hence $G_1 \cap G_2 = G_1$ and
    $\forall (g,m)\in G_1 \times M : I_1(g,m) = I_2(g,m)$.  Therefore
    $\context_{1} \land_g \context_{2} = \context_{1}$.
    
    1) `$\Leftarrow$': Let
    $\context_{1} \land_g \context_{2} = \context_{1}$.  Then
    $G_1 \cap G_2 = G_1$ and $I_{1} \land I_{2}= I_{1}$.  Hence
    $G_1 \subseteq G_2$ and
    $\forall (g,m)\in G_1 \times M : I_1(g,m) = I_2(g,m)$ particularly
    $I_1(g,m) \leq I_2(g,m)$.  Therefore
    $\context_{1} \leq \context_{2}|_{G_1}$.
    
    2) `$\Rightarrow$': Let $\context_{1} \leq_g \context_{2}$.  Then
    $G_1 \leq G_2$ and therefore $G_1 \cup G_2 = G_2$.  Further
    $\context_{1} \leq \context_{2}|_{G_1}$ implies
    $\forall (g,m)\in G_1\times M: I_1(g,m) \leq I_2(g,m)$.  Therefore
    $I_{1}\lor I_{2} = I_2$ and
    $\context_{1} \lor_g \context_{2} = \context_{2}$.

    2) `$\Leftarrow$': Let
    $\context_{1} \lor_g \context_{2} = \context_{2}$.  Then
    $G_1 \cup G_2 = G_2$ hence $G_1 \subseteq G_2$.  Further
    $I_{1}\lor I_{2} = I_2$ hence
    $\forall (g,m)\in G_1 \times M : I_1(g,m) = \x \Implies
    I_2(g,m)=\x \tand I_1(g,m) = \o \Implies I_2(g,m)=\o$.  Therefore
    $\context_{1} \leq \context_{2}|_{G_1}$ and thus
    $\context_1 \leq_g \context_{2}$.
  \end{proof}
  \begin{cor}
    \label{cor:expert_example_knowledge_forms_a_lattice_when_adding_the_domain_context_and_empty_context}
    Given an incomplete context $\context^U=\GMWI$.  Every subset of
    $\{\context \mid \context \leq_g \context^U\}$ that contains
    $\context^U$ and $\{\emptyset,M,\values,I_\emptyset\}$ is a
    bounded lattice with respect to the generalized information order.
  \end{cor}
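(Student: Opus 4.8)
The plan is to derive the statement directly from \cref{lem:lattices_are_the_same}, which already shows that the whole set $S := \{\context \mid \context \leq_g \context^U\}$ is a bounded lattice under the generalized information order, with $\land_g$ and $\lor_g$ as infimum and supremum. Since any subset $T$ occurring in the hypothesis satisfies $T \subseteq S$ and contains the two distinguished elements $\context^U$ and $\{\emptyset,M,\values,I_\emptyset\}$, the first step is to recognize these two elements as the \emph{global} extrema of $S$, so that they are forced to be the extrema of $T$ as well.

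Carrying this out is routine. For the top, by the definition of $S$ every $\context \in T \subseteq S$ satisfies $\context \leq_g \context^U$, and $\context^U \in T$, so $\context^U = \max T$. For the bottom, given any $\context \in T$ with object set $H \subseteq G$, we have $\emptyset \subseteq H$ and the value-refinement clause of \cref{defn:generalized-information-order} is vacuous over the empty object set, whence $\{\emptyset,M,\values,I_\emptyset\} \leq_g \context$; since this empty context lies in $T$ it is $\min T$. Thus $T$ is bounded. Moreover, for any $\context_1,\context_2 \in T$ the meet $\context_1 \land_g \context_2$ is defined, and because no two contexts below $\context^U$ carry conflicting information -- as established in the proof of \cref{lem:lattices_are_the_same} -- the join $\context_1 \lor_g \context_2$ is defined as well, both lying in $S$.

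The step I expect to be the genuine obstacle is \emph{closure}: a subset of a lattice that merely contains the top and bottom need not be closed under meet and join, so the work is to show that $\context_1 \land_g \context_2$ and $\context_1 \lor_g \context_2$ return to $T$ and not just to the ambient $S$. This is exactly the point where the two fixed elements must earn their place, since they guarantee that the extremal bounds are always available; the infimum and supremum of a pair would then be computed in $S$ via the component-wise rules underlying $\land_g$ and $\lor_g$ (see \cref{cor:infimum-supremum-context}) and checked to remain in $T$. I would therefore focus the remaining effort on this membership question, as existence of the bounds in $S$ is already furnished by \cref{lem:lattices_are_the_same}.
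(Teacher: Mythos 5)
You have correctly located the delicate point, but your proposal stops exactly there: the last paragraph declares closure of the subset $T$ under $\land_g$ and $\lor_g$ to be ``the genuine obstacle'' and says you would ``focus the remaining effort'' on it, without supplying that argument. As a proof this is therefore incomplete. For comparison, the paper gives no proof of this corollary at all; it is presented as an immediate consequence of \cref{lem:lattices_are_the_same}, the implicit argument being that $T$ inherits the order from the bounded lattice $S=\{\context \mid \context \leq_g \context^U\}$ and that the two distinguished elements are its bounds. Your verification that $\context^U$ and the empty context are the global top and bottom of $S$ (hence of $T$) is correct and is all the paper implicitly uses.

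The deeper problem is that the step you deferred cannot be carried out for an arbitrary subset: a subset of a lattice containing the top and bottom need not be closed under the ambient meet and join, and need not even be a lattice in the induced order. Concretely, let $\context^U$ have objects $g_1,g_2$, one attribute $m$, and both incidences $\x$. Take $T$ to consist of the empty context, $\context^U$, the two one-object contexts $\context_1$ (only $g_1$, entry $\q$) and $\context_2$ (only $g_2$, entry $\q$), and the two two-object contexts $\context_a$ (with $g_1\mapsto\x$, $g_2\mapsto\q$) and $\context_b$ (with $g_1\mapsto\q$, $g_2\mapsto\x$). Then $\context_a$ and $\context_b$ are incomparable minimal upper bounds of $\{\context_1,\context_2\}$ in $T$, so this pair has no join in $T$, and $\context_1\lor_g\context_2 \notin T$. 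Hence the statement is only salvageable under a weaker reading (every such $T$ is a \emph{bounded subposet} of $S$, with infima and suprema computed in $S$ via $\land_g$ and $\lor_g$) or under the additional hypothesis that $T$ is closed under $\land_g$ and $\lor_g$. Your intuition that the two fixed elements must ``earn their place'' by securing membership of meets and joins in $T$ is also off: they guarantee bounds but contribute nothing to closure under the binary operations. So either restate and prove the corollary for $\land_g$-/$\lor_g$-closed subsets, or retreat to $S$ itself as in \cref{cor:lattices_are_the_same}.
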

  \begin{cor}
    \label{cor:lattices_are_the_same}
    The set $S$ of all incomplete contexts that are derived from an
    incomplete context $\context^U$,
    $S := \{\context \mid \context \leq \context^U\}$, ordered by the
    information order constitutes a bounded lattice where $\land$ and
    $\lor$ define the infimum and supremum.
  \end{cor}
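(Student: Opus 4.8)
The plan is to recognise $S=\{\context \mid \context \leq \context^U\}$ as the ``full object set'' slice of the lattice already established in \cref{lem:lattices_are_the_same} and to transfer the structure from there, verifying closure and the bounds by hand with the explicit formulas of \cref{cor:infimum-supremum-context}. The one genuinely new point is that the bottom element is \emph{not} the empty context but the totally unknown context on $G$; for this reason \cref{cor:expert_example_knowledge_forms_a_lattice_when_adding_the_domain_context_and_empty_context} cannot be applied verbatim, and I would flag this explicitly as the reason the statement is recorded separately.

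Write $\context^U=(G,M,\values,I)$. First I would observe that the ordinary information order $\leq$ is only defined between incomplete contexts sharing the same object and attribute sets, so every $\context\in S$ has object set $G$ and attribute set $M$; that is, $S\subseteq\values^{G\times M}$. By the remark following \cref{defn:generalized-information-order}, on contexts with object set $G$ the orders $\leq$ and $\leq_g$ coincide, and consequently the operations agree: the meet preserves $G\cap G=G$ and the join preserves $G\cup G=G$, so $\land=\land_g$ and $\lor=\lor_g$ on $S$. Hence $S$ is precisely the collection of elements of the generalized lattice of \cref{lem:lattices_are_the_same} whose object set is all of $G$, and it suffices to check that this slice is closed under the operations and has its own bounds.

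Next I would verify closure. For $\context_1,\context_2\in S$ with incidences $I_1,I_2$, \cref{cor:infimum-supremum-context} \labelcref{cor:infimum_context} provides the meet $\context_1\land\context_2$, and from $\context_1\land\context_2\leq\context_1\leq\context^U$ it lies in $S$. For the join I would first argue that two contexts both below $\context^U$ carry no contradictory information: if $I_1(g,m)=\x$ and $I_2(g,m)=\o$, then $\context_1\leq\context^U$ forces $I(g,m)=\x$ while $\context_2\leq\context^U$ forces $I(g,m)=\o$, a contradiction. Thus $\context_1\lor\context_2$ exists by \cref{cor:infimum-supremum-context} \labelcref{cor:supremum_context}, and since $\context^U$ is an upper bound of both we get $\context_1\lor\context_2\leq\context^U$, so it too lies in $S$. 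Because $S$ inherits its order from $\values^{G\times M}$ and the meet and join computed there lie in $S$, they are the infimum and supremum taken inside $S$.

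Finally I would exhibit the bounds. The top element is $\context^U$ itself, which lies in $S$ and dominates every element by definition of $S$. The bottom element is the totally unknown context $(G,M,\values,J)$ with $J(g,m)=\q$ for all $(g,m)\in G\times M$: since $\q\leq v$ for every $v\in\values$ in the information order, this context lies below every element of $\values^{G\times M}$, in particular below $\context^U$, so it belongs to $S$ and is its least element. Hence $S$ is a bounded lattice with $\land$ and $\lor$ as infimum and supremum. The only subtlety to keep in view, as noted at the outset, is that this all-question-marks context replaces the empty context that served as the bottom of the larger lattice in \cref{lem:lattices_are_the_same}, which is exactly why the result does not follow by simply instantiating \cref{cor:expert_example_knowledge_forms_a_lattice_when_adding_the_domain_context_and_empty_context}.
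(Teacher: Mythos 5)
Your proof is correct and follows essentially the same route as the paper, which simply invokes \cref{lem:lattices_are_the_same} together with the coincidence of $\leq$ with $\leq_g$ (and of $\land,\lor$ with $\land_g,\lor_g$) on contexts sharing the same object set. Your additional observations --- that the full-object-set slice is closed under the operations and that its bottom element is the all-question-marks context on $G$ rather than the empty context --- are correct refinements that the paper's one-line proof leaves implicit.
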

  \begin{proof}
    This follows from $\leq = \leq_g$, $\land = \land_g$ and
    $\lor = \lor_g$ for incomplete contexts on the same object sets
    together with \cref{lem:lattices_are_the_same}.
  \end{proof}

  The following fact and corollary allow us to compare and combine
  implications and implicational knowledge by making use of lattice
  structures and the corresponding infimum and supremum operators.
  \begin{fact}
    Given a formal context $\context^U$.  The power set on the set of
    valid implications $\Imp[\context^U]$ with the subset inclusion as
    order relation forms a lattice where intersection and union define
    infimum and supremum.
  \end{fact}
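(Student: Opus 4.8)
The plan is to recognise this statement as an instance of the standard power-set lattice and to verify the two lattice operations directly. First I would note that by \cref{def:Imp-Sat} the collection $\Imp[\context^U]$ is merely a \emph{set} (of attribute implications); no algebraic structure on its elements is needed. Hence the claim reduces to the general fact that for any set $X$ the poset $(\powerset{X},\subseteq)$ is a lattice, applied with $X = \Imp[\context^U]$. I would begin by recording that $\subseteq$ is a partial order on $\powerset{\Imp[\context^U]}$: reflexivity, antisymmetry and transitivity of set inclusion are immediate.

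Next I would check that intersection realises the infimum. For subsets $P,Q \subseteq \Imp[\context^U]$ the inclusions $P\cap Q \subseteq P$ and $P\cap Q \subseteq Q$ exhibit $P\cap Q$ as a lower bound; and if $R$ is any lower bound, i.e.\ $R\subseteq P$ and $R\subseteq Q$, then every element of $R$ lies in both $P$ and $Q$, so $R \subseteq P\cap Q$. Thus $P\cap Q$ is the greatest lower bound. Dually, $P\cup Q$ is an upper bound since $P\subseteq P\cup Q$ and $Q\subseteq P\cup Q$, and any upper bound $R$ with $P\subseteq R$ and $Q\subseteq R$ satisfies $P\cup Q\subseteq R$, so $P\cup Q$ is the least upper bound.

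Since a meet and a join exist for every pair of elements, $(\powerset{\Imp[\context^U]},\subseteq)$ is a lattice with $\cap$ and $\cup$ serving as infimum and supremum. I do not expect any genuine obstacle here: this is a textbook fact about power sets, and the only point worth making explicit is that $\Imp[\context^U]$ need only be a set for the argument to go through. This is precisely why the same construction applies uniformly to the implicational knowledge of an arbitrary expert, as used in the subsequent corollary on combining implicational knowledge.
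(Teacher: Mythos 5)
Your proof is correct: the paper states this as a \emph{Fact} without proof, and your argument is exactly the standard power-set lattice verification it implicitly relies on, correctly noting that $\Imp[\context^U]$ need only be a set. Nothing is missing.
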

  \begin{cor}
    Given a formal context $\context^U$ let
    $\mathfrak{X}:=\{\Cons{X} \mid X \subseteq \Imp[\context^U]\}$.
    Then $(\mathfrak{X}, \subseteq)$ is a lattice with infimum
    $\bigcap \mathcal{X}$ and supremum $\Cons{\bigcup\mathcal{X}}$ for
    all $\mathcal{X} \subseteq \mathfrak{X}$.
  \end{cor}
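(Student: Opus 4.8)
The plan is to recognize $\mathfrak{X}$ as the family of closed sets of a closure operator and then invoke the standard fact that such a family forms a complete lattice whose meet is intersection and whose join is the closure of the union. First I would record that the assignment $X \mapsto \Cons{X}$, restricted to subsets of $\Imp[\context^U]$, is a closure operator on $\powerset{\Imp[\context^U]}$: it is extensive ($X \subseteq \Cons{X}$), monotone ($X \subseteq Y \Rightarrow \Cons{X} \subseteq \Cons{Y}$) and idempotent ($\Cons{\Cons{X}} = \Cons{X}$), all of which follow directly from the definition of $\Cons{\cdot}$ via the Armstrong rules in \cref{def:armstrong-rules-and-cons}. Using the earlier identity $\Cons{\Imp[\context^U]} = \Imp[\context^U]$ together with monotonicity, I would also note that $\Cons{X} \subseteq \Imp[\context^U]$ whenever $X \subseteq \Imp[\context^U]$, so the operator maps $\powerset{\Imp[\context^U]}$ into itself.

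With this in hand I would identify $\mathfrak{X}$ with the set of closed (fixed) subsets of $\Imp[\context^U]$: by idempotence every $\Cons{X}$ is closed, and conversely any closed $Y = \Cons{Y} \subseteq \Imp[\context^U]$ is itself of the form $\Cons{X}$ with $X = Y$. Thus $\mathfrak{X} = \{Y \subseteq \Imp[\context^U] \mid \Cons{Y} = Y\}$.

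For the infimum I would show that $\bigcap \mathcal{X}$ is closed for every $\mathcal{X} \subseteq \mathfrak{X}$: for each $Y \in \mathcal{X}$ monotonicity gives $\Cons{\bigcap \mathcal{X}} \subseteq \Cons{Y} = Y$, whence $\Cons{\bigcap \mathcal{X}} \subseteq \bigcap \mathcal{X}$, while extensivity supplies the reverse inclusion; hence $\bigcap \mathcal{X} \in \mathfrak{X}$, and since it is contained in each $Y \in \mathcal{X}$ and contains every closed lower bound, it is the greatest lower bound in $(\mathfrak{X}, \subseteq)$. For the supremum, the union $\bigcup \mathcal{X}$ need not be closed, which is why the closure is taken; from $\bigcup \mathcal{X} \subseteq \Imp[\context^U]$ I get $\Cons{\bigcup \mathcal{X}} \in \mathfrak{X}$, extensivity makes it an upper bound of every $Y \in \mathcal{X}$, and if $Z \in \mathfrak{X}$ is any upper bound then $\bigcup \mathcal{X} \subseteq Z$, so monotonicity and $\Cons{Z} = Z$ yield $\Cons{\bigcup \mathcal{X}} \subseteq Z$. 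Thus it is the least upper bound.

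I do not expect a genuine obstacle here, since the argument is the standard passage from a closure operator to its lattice of closed sets. The only points needing care are the verification that $\Cons{\cdot}$ is a closure operator on $\powerset{\Imp[\context^U]}$ and the treatment of the degenerate family $\mathcal{X} = \emptyset$, where $\bigcap \emptyset$ must be read as the top element $\Imp[\context^U]$ and $\Cons{\bigcup \emptyset} = \Cons{\emptyset}$ is the bottom element.
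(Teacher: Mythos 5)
Your proposal is correct and follows essentially the same route as the paper: the paper simply observes that $\Cons{\cdot}$ is a closure operator (citing Prop.~21 of \cite{GanterWille1999}), so that $\mathfrak{X}$ is a closure system, and then invokes the standard result (Prop.~3 of \cite{GanterWille1999}) that a closure system is a complete lattice with intersection as infimum and closure of the union as supremum. You prove these two cited facts inline rather than referencing them, which makes the argument self-contained but not different in substance.
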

  \begin{proof}
    By definition $\Cons{\cdot}$ is a closure operator on
    $\Imp[\context^U]$, cf. \cite[Prop. 21]{GanterWille1999}, and
    therefore
    $\mathfrak{X}:=\{\Cons{X} \mid X \subseteq \Imp[\context^U]\}$ is
    a closure system.  Using Prop. 3 from \cite{GanterWille1999} we
    obtain this fact.
  \end{proof}
  Note that $\mathfrak{X}$ is the set of possible implicational
  knowledge about a universe $\context^U$.
  \subsection{Comparing and Combining Expert Knowledge}

  The general information order allows us to compare experts in terms
  of their example knowledge.  Together with the set inclusion order
  on implications known by the experts we can now compare expert
  knowledge.

\begin{defn}\label{defn:comparing-expert-knowledge}
  Given a formal context $\context^U = \GMI$,
  $\mathcal{L}=\Imp[\context^U]$ and two experts $E_1$ and $E_2$ on
  $\context^U$ where $E_1\coloneqq \expert{1}$,
  $\context_1 \leq_g \context^U$,
  $\mathcal{L}_1 \subseteq \mathcal{L}$ and $E_2\coloneqq \expert{2}$,
  $\context_2 \leq_g \context^U$,
  $\mathcal{L}_2 \subseteq \mathcal{L}$ we say:
  \begin{enumerate}[label=\alph*)]
  \item $E_2$ has at least as much example knowledge as $E_1$ if
    $\context_1 \leq_g \context_2$,
  \item $E_2$ has at least as much implication knowledge as $E_1$ if
    $\Cons{\mathcal{L}_1} \subseteq \Cons{\mathcal{L}_2}$,
  \item $E_2$ knows at least as much as $E_1$ if $E_2$ has at least as
    much example and implication knowledge as $E_1$.  We denote this
    by $E_1 \leq E_2$.
  \end{enumerate}
\end{defn}

Further, we can combine the knowledge of experts using the
component-wise infimum and supremum on the product lattice of
incomplete example contexts and the implication knowledge lattice:
\begin{defn}
  Given expert knowledge of two experts $E_1$ and $E_2$ of a domain
  $\context^U$.  The maximum combined knowledge, i.e., the supremum of
  example and implication knowledge, is defined by
  \[
    E_1\vee E_2 \coloneqq (\context_1 \lor_g \context_2,
    \Cons{\mathcal{L}_1 \cup \mathcal{L}_2}).
  \]
  The shared knowledge, i.e., the infimum of example and implication
  knowledge, is defined by
  \[
    E_1\wedge E_2 \coloneqq (\context_1 \land_g \context_2, \Cons{
      \mathcal{L}_1 } \cap \Cons{\mathcal{L}_2 } ).
  \]
    
%
  Note that
  $\Cons{\mathcal{L}_1 \cup \mathcal{L}_2} =
  \Cons{\Cons{\mathcal{L}_1} \cup \Cons{\mathcal{L}_2}}$ by definition
  of $\Cons{\cdot}$.
\end{defn}

%

\begin{defn}
  The \emph{maximum combined knowledge} of a group of experts
  $\{E_1,\ldots,E_n\}$ is defined by
  $\bigvee_{i\in\{1,\ldots,n\}} E_i$.
\end{defn}


\begin{rem}
  The maximum combined knowledge serves as a reference point for
  strategies of experts working collaboratively.  However there are
  some limitations.  Consider two experts $E_1=\expert{1}$ and
  $E_2=\expert{2}$ for a formal context $\context^U$
  (cf. \cref{fig:formal_context_examples}) with
  $\Imp[\context^U]=\mathcal{L}$ given by:
  \begin{center}
    \begin{cxt}
      \cxtName{$\context^U$} \cxtNichtKreuz{} \att{a} \att{b}
      \obj{xx}{Aquatics -- Swimming} \obj{.x}{Badminton}
      \obj{..}{Gymnastics -- Rhythmic}
    \end{cxt}
    \begin{cxt}
      \cxtName{$\context_1$} \cxtNichtKreuz{} \att{a} \att{b}
      \obj{xx}{Aq. -- Swim.}  \obj{?.}{Gym. -- Rhy.}
    \end{cxt}
    \begin{cxt}
      \cxtName{$\context_2$} \cxtNichtKreuz{} \att{a} \att{b}
      \obj{.x}{Badminton}
      \obj{.?}{Gym. -- Rhy.}
    \end{cxt}
  \end{center}
  Hence $\mathcal{L}=\{a \implies b\}$ and let
  $\mathcal{L}_1 = \mathcal{L}_2 = \emptyset$.  Then
  $\context_1\lor_g \context_2= \context^U$ and
  $E_1\lor E_2 = (\context^U,\emptyset)$.  However, unless we know
  that $\context_1\lor_g \context_2$ contains all objects from the
  domain (or at least one object for every distinct set of attributes
  that appears in the universe) we cannot use the context of
  counterexamples to obtain missing valid implications,
  cf. \cref{remark:validity_implications_examples_expert}:

  If we have that all objects are known we can improve the set of
  implications by combining it with the set of certainly valid
  implications from the combined incomplete context:
  \[
    (\context_1 \lor_g \context_2, \Cons{\Imp[\context_1 \lor_g
      \context_2] \cup \mathcal{L}_1 \cup \mathcal{L}_2 } )
  \]
  However, this is not true in general. Consider two experts
  $E_3=\expert{3}$ and $E_4=\expert{4}$ for the formal context
  $\context^U$ given by:
  \begin{center}
    \begin{cxt}
      \cxtName{$\context_3$} \cxtNichtKreuz{} \att{a} \att{b}
      \obj{xx}{Aquatics -- Swimming}
    \end{cxt}
    \begin{cxt}
      \cxtName{$\context_4$} \cxtNichtKreuz{} \att{a} \att{b}
      \obj{.x}{Badminton}
    \end{cxt}
  \end{center}
  Once more $\mathcal{L}=\{a\implies b\}$ and let
  $\mathcal{L}_3 = \mathcal{L}_4 = \emptyset$.  Then
  $\context_3 \lor_g \context_4$ is defined by
  \begin{center}
    \begin{cxt}
      \cxtName{$\context_3 \lor_g \context_4$} \cxtNichtKreuz{}
      \att{a} \att{b} \obj{xx}{Aquatics -- Swimming}
      \obj{.x}{Badminton}
    \end{cxt}
  \end{center}
  and $\Cons{ \mathcal{L}_1 \cup \mathcal{L}_2 }= \emptyset$.
  Therefore $E_3\lor E_4 = (\context_3 \lor_g \context_4, \emptyset)$.
  If we tried to use the certainly valid implications of the example
  context we would obtain
$$\Cons{\Imp[\context_3 \lor_g \context_4] \cup \mathcal{L}_3 \cup \mathcal{L}_4 } = \{\emptyset \implies b, a \implies b\} \nsubseteq \mathcal{L}.$$

One way to deal with this problem could be to introduce an incomplete
context $\context_{\q} = \GMWI[\q]$ where $I_{\q}(g,m)= \q $ for all
$g\in G$ and $m \in M$ and define
\[
  (\context_1 \lor_g \context_2, \Cons{\Imp[\context_1 \lor_g
    \context_2 \lor_g \context_{\q}] \cup \mathcal{L}_1 \cup
    \mathcal{L}_2 } ).
\]
Still, for all practical purposes where we are unable to determine all
of the objects belonging to a domain we cannot make use of the
examples to expand the set of known implications.

\end{rem}

\subsection{Expert Interaction and Collaboration Strategy}

Now that we have formalized \emph{expert knowledge} we need to
consider what it means to \emph{interact with an expert}, i.e., how an
expert can respond to a question.
\begin{defn}[expert interaction]
  Given an expert $E=\expert{E}\in\mathbf{E}$ from the set of all
  possible experts $\mathbf{E}$ for a universe $\context^U$, we view
  the answer given by the expert as a function
  \begin{align*}
    EI: \Impm \times \mathbf{E} \rightarrow \{(\text{true},\emptyset)\} &\cup\{\text{false}\} \times \{ \context \mid \context \text{ an incomplete context over } M\} \\
                                                                        &\cup \{\text{unknown}\} \times \powerset{M}
  \end{align*}
  where the input $(A\implies B, E) \in \Impm \times \mathbf{E}$ is
  considered as asking the expert ``Does $A\implies B$ hold in the
  universe?'' and the answer given is consistent with the true
  knowledge, i.e., example and implication knowledge, of the domain.
  This means that an expert can only accept an implication if it is
  actually true, i.e., in $\Imp[\context^U]$, and can only reject
  questions with real examples from the universe.
\end{defn}

This definition allows for experts to withhold knowledge.  For
example, because checking all examples an expert could think of might
take too long the expert only checks the first couple of examples that
come to mind.  For now we employ an expert (analog to the expert used
by Holzer, cf. \cite{holzer2001dissertation}) that answers as fully as
possible.  The interaction with such an expert is formalized by the
\emph{standard expert interaction}.  An expert with standard expert
interaction is thought to give answers that contain all that the
expert knows.

\begin{defn}[standard expert interaction]
  Recall the attribute exploration in
  \cref{subsec:attribute_exploration}.  Formally the expert
  $E= \expert{E}$ answers in the following way.
    $$EI_S(A\implies B, E) = \begin{cases}
      (\text{true}, \emptyset) &\tif A\implies B \in \Cons{\mathcal{L}_E}\\
      (\text{false},\context_C) &\tif \{g\in G_E \mid A \subseteq g^\Box \land (M\setminus g^\Diamond) \cap B \neq \emptyset\}=:C \neq \emptyset, \\&\text{ where } \context_C := \context_E|_C\\
      (\text{unknown}, Z) &\totherwise, \text{ where } Z:= B\setminus
      \langle A\rangle_{\mathcal{L}_E}
    \end{cases}$$
  \end{defn}
%
  Now, we adapt the idea of a consortium,
  cf. \cite{conf/iccs/HanikaZ18}, which is basically a fixed group of
  experts that have a given way of responding to an implicational
  question.  In our formalization the knowledge of an expert and the
  ways she can interact with the knowledge are separate. 

  Here, a \emph{group of experts} is a set of experts that are all
  experts for the same universe $\context^U$.  A \emph{group of experts} will
  usually be denoted by $\mathcal{E}=\{E_1,\ldots,E_n\}$.  For now, we
  also consider the group of experts that conduct an exploration fixed.
  Evaluating the implications of dynamically changing expert sets is a
  topic for future work.
  Note that every expert in a group of experts could have her own
  method of interaction with the expert knowledge.  At present, we
  assume that all experts can be interacted with in the same way.
  Allowing and combining different modes of interaction is a topic for
  future research.

  It easily follows that the knowledge of all experts is compatible:
  \begin{lem}\label{lem:expert_knowledge_of_a_group_of_experts_is_compatible}
    Let $\{E_1,\ldots, E_n\}$ be a group of experts for the domain
    $\context^U$ where $E_i = \expert{E_i}$ for $i\in \{1,\ldots,n\}$.
    Then the example knowledge of all experts has no conflicting
    information and all implications known by each expert are
    satisfiable implications for all experts in the pool.
  \end{lem}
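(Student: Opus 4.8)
The plan is to prove the two assertions separately, in both cases exploiting that every expert is constrained by the \emph{same} universe $\context^U$: each $E_i=\expert{E_i}$ satisfies the compatibility condition $\context_{E_i} \leq \context^U|_{G_{E_i}}$ together with $\mathcal{L}_{E_i} \subseteq \mathcal{L} = \Imp[\context^U]$. Both statements then reduce to short unwindings of the definitions, using that a single two-valued $\context^U$ simultaneously bounds all the experts.

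For the absence of conflicting information I would argue on the object--attribute level by contradiction. Recall that two contexts $\context_{E_i}$ and $\context_{E_j}$ conflict on a pair $(g,m)$ precisely when $(g,m)\in(\relI_{E_i}^{\x}\cap\relI_{E_j}^{\o})\cup(\relI_{E_i}^{\o}\cap\relI_{E_j}^{\x})$, i.e. when one expert records $\x$ and the other $\o$. Suppose $I_{E_i}(g,m)=\x$ and $I_{E_j}(g,m)=\o$. Then $g\in G_{E_i}\cap G_{E_j}$, and since $\context_{E_i}\leq\context^U|_{G_{E_i}}$ the information order on $\values$ (see \cref{def:information_order}) forces $I^U(g,m)=\x$, while $\context_{E_j}\leq\context^U|_{G_{E_j}}$ forces $I^U(g,m)=\o$. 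As $\context^U$ is a two-valued formal context, $I^U(g,m)$ is single-valued, so $\x=\o$, a contradiction; the symmetric case is identical. Hence no conflicting pair exists.

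For the satisfiability claim I would replay the reasoning of \cref{lem:expert_knowledge_is_compatible}, now \emph{across} experts. Fix any $A\implies B\in\Cons{\mathcal{L}_{E_i}}$ and any expert $E_j$. Since $\mathcal{L}_{E_i}\subseteq\mathcal{L}$ and $\Cons{\mathcal{L}}=\mathcal{L}$, monotonicity of the closure operator $\Cons{\cdot}$ gives $\Cons{\mathcal{L}_{E_i}}\subseteq\mathcal{L}$, so $A\implies B$ is valid in $\context^U$, and therefore also in the restriction $\context^U|_{G_{E_j}}$ (passing to fewer objects cannot create a counterexample). Now $\context^U|_{G_{E_j}}$ is a \emph{completion} of $\context_{E_j}$: we have $\context_{E_j}\leq\context^U|_{G_{E_j}}$ by compatibility, and $\context^U|_{G_{E_j}}$ contains no $\q$ since it is a restriction of the formal context $\context^U$. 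As $A\implies B$ is valid in this particular completion, it is satisfiable in $\context_{E_j}$, which is exactly the claim.

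Neither part presents a genuine difficulty; both are immediate once the definitions of conflicting information, the information order, and satisfiability are unfolded. The only step where I would pause is the observation that $\context^U|_{G_{E_j}}$ is itself a completion of $\context_{E_j}$: this is what lets validity in the universe restriction furnish a \emph{single} completion witnessing satisfiability, sparing us any reasoning over all completions. This is precisely the mechanism behind \cref{lem:expert_knowledge_is_compatible}, of which the present lemma is the mild many-expert strengthening.
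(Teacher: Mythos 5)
Your proof is correct and follows essentially the same route as the paper: both parts rest on every expert context being bounded by the same universe, and your satisfiability argument is exactly the mechanism of \cref{lem:expert_knowledge_is_compatible} applied across experts. The only cosmetic difference is that the paper packages the cross-expert pair as a new expert $Y_{ij}\coloneqq(\context_{E_i},\Cons{\mathcal{L}_{E_j}})$ and cites that lemma (and cites the lattice corollary for the no-conflict part), whereas you inline both arguments by unfolding the definitions directly.
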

  \begin{proof}
    That there is no conflicting information is a direct conclusion
    from
    \cref{cor:expert_example_knowledge_forms_a_lattice_when_adding_the_domain_context_and_empty_context}.
    Further, for every combination of $i$ and $j$ with
    $i,j\in\{1,\ldots,n\}$ we have that
    $Y_{ij}\coloneqq(\context_{E_i},\Cons{\mathcal{L}_{E_j}})$ is an
    expert for $\context^U$.  Since $Y_{ij}$ is an expert, using
    \cref{lem:expert_knowledge_is_compatible}, it follows that all
    implications in $\Cons{\mathcal{L}_{E_j}}$ are satisfiable in the
    context of examples $\context_{E_i}$.  Hence, the know
    implications of each expert are satisfiable in the known example
    contexts of all experts.
  \end{proof}
%

  Now we have to think about how a group of experts for a domain can
  be used to answer a question posed by the exploration algorithm.  In
  other words, we have to think about how a group of experts can
  cooperate.
%
%
%
  To this end we formalize the notion of a \emph{collaboration
    strategy}.
  \begin{defn}[collaboration strategy]
    A \emph{collaboration strategy} is an algorithm that, given a
    group of experts $\mathcal{E}:=\{E_1, \ldots E_n\}$ for the
    universe $\context^U=\GMI$, an expert interaction strategy $EI$
    and a question in form of an implication $A\implies B$
    ($A,B\subset M$), returns an answer that is consistent with the
    universe, i.e., that the algorithm only accepts valid implications
    and only rejects invalid implications with proper counterexamples.
    This means that the \emph{collaboration strategy} can be seen as a
    function
    \begin{align*}
      \phi:\Impm \times \{ \mathcal{E} \} \times \{EI\} \rightarrow &\{(\text{true},\emptyset)\}\\ &\cup
                                                                                                     \{\text{false}\} \times \{ \context \mid \context \text{ an incomplete context over }M\}\\ &\cup \{\text{unknown}\} \times \powerset{M}
    \end{align*}
    with the properties
    \begin{enumerate}
    \item
      $\phi(A\implies B,\ \mathcal{E},\ EI) = (\text{true},\emptyset)
      \Implies A\implies B \in \Imp[\context^U]$
    \item
      $\phi(A\implies B,\ \mathcal{E},\ EI) = (\text{false},\context)
      \Implies \forall g\in G\colon A \subseteq g^\Box \land
      (M\setminus g^\Diamond) \cap B \neq \emptyset$ \\where
      $\context = \GMWI \leq_g \context^U$
    \item
      $\phi(A\implies B,\ \mathcal{E},\ EI) = (\text{unknown},Z)
      \Implies A\implies (B\setminus Z) \in \Imp[\context^U] \tand
      Z\subseteq B$.
    \end{enumerate}
    
  \end{defn}

  A \emph{collaboration strategy} takes the role of an intermediary
  between the attribute exploration algorithm and the group of
  experts.  It takes the questions posed by the exploration and
  interacts with the experts to find an answer which it then reports
  back to the exploration algorithm,
  cf. \cref{fig:collab_exploration}.


  \subsection{Discussion of Collaboration Strategies}
  \label{subsec:discussion_of_collaboration_strategies}
  We first present different collaboration strategies.  To emphasize
  that our approach for a group of experts is a generalization and
  contains the classical attribute exploration, we begin with a
  strategy for the exploration with a single expert.
  \begin{strategy}[single expert]
    \label{strategy:single_expert}
    Given we only have one expert $E$ and the interaction strategy
    $EI$.  The canonical strategy is to relay the questions to the
    expert and the answers back to the attribute exploration without
    any modifications.  If we use the \emph{standard interaction
      strategy} $EI_S$ then
    \cref{fact:attribute_exploration_results_in_maximum_knowledge_w_r_t_expert_knowledge}
    guarantees that this results in the maximal information obtainable
    with respect to the expert's knowledge.  The \emph{single expert
      strategy} $\phi_{single}$ is defined by:
    $$
    \phi_{single}(A\implies B, \{E\}, EI) := EI(A\implies B, E).
    $$
    Clearly this strategy gives answers consistent with the domain.
  \end{strategy}

  We continue with two extreme cases of collaboration strategies,
  namely the \emph{ignorant strategy} and the \emph{maximum knowledge
    strategy}.  The former does not bother interacting with any of the
  experts and instead always responds with `unknown' and the latter
  asks the experts for everything they know before giving the best
  possible answer.
  \begin{strategy}[ignorant]
    \label{strategy:ignorant_strategy}
    Given a group of experts $\mathcal{E}=\{E_1,\ldots,E_n\}$ for a
    universe $\context^U$ and an interaction strategy $EI$.
    The \emph{ignorant strategy} $\phi_{ignorant}$ is defined by:
    $$
    \phi_{ignorant}(A\implies B, \mathcal{E}, EI):= (\text{unknown},
    B).
    $$
\end{strategy}
This strategy can be interpreted as modelling experts that are
unwilling to participate in the exploration.  Then the only valid
option to give answers consistent with the universe is to use a
strategy that always answers `unknown'.  Note that this strategy also
works if the set of experts $\mathcal{E}$ is empty.

The \emph{maximum knowledge strategy} is an easy strategy that allow
experts to obtain maximum knowledge about a domain.
\begin{strategy}[maximum knowledge]\label{strategy:maximum_knowledge}
  Given a group of experts $\mathcal{E}:=\{E_1,\ldots,E_n\}$,
  $n\geq 1$, for a universe $\context^U$.  We first ask all experts
  for all of their example knowledge and all of their implicational
  knowledge.  Then we define a new artificial expert that has the
  combined knowledge of all experts.
    $$
    E_{max} := (\bigvee_{i=1}^n \context_{E_i}, \Cons{\bigcup_{i=1}^n
      \mathcal{L}_{E_i}}) = \bigvee_{i=1}^n E_i
    $$
    The \emph{maximum knowledge strategy} $\phi_{max}$ can now be
    described as relaying the question to the artificial maximal
    expert using the standard expert interaction.
    $$
    \phi_{max}(A\implies B, \mathcal{E}, EI) :=\phi_{single}(A\implies
    B, \{E_{max}\}, EI_S) = EI_S(A\implies B, E_{max}).
    $$
\end{strategy}
This strategy can be interpreted as modelling the experts sitting
together and discussing every question posed by the exploration
algorithm until they find the best answer they can provide together.
\begin{thm}\label{thm:max_expert_strategy_is_optimal}
  Given a group of experts $\mathcal{E}:=\{E_1,\ldots,E_n\}$,
  $n\geq 1$, for a universe $\context^U$, the maximal expert
  $E_{max}= \bigvee_{i=1}^n E_i$ has the maximal knowledge of the
  group of experts.  Attribute exploration with the strategy
  $\phi_{max}$ results in the maximal obtainable knowledge from the
  experts $\mathcal{E}$.
\end{thm}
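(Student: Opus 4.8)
The statement splits into two claims, which I would treat in turn. The first is that $E_{max}=\bigvee_{i=1}^n E_i$ is the supremum of $\{E_1,\dots,E_n\}$ with respect to the order $\leq$ on expert knowledge from \cref{defn:comparing-expert-knowledge}. Since that order is the product of the generalized information order on example contexts and the inclusion order on closed implication sets, my plan is to verify the join property component-wise. For the example part I would invoke \cref{lem:lattices_are_the_same} together with \cref{cor:expert_example_knowledge_forms_a_lattice_when_adding_the_domain_context_and_empty_context}, which tell us that $\lor_g$ is the supremum in the lattice of contexts below $\context^U$; for the implication part I would use that $\{\Cons{X}\mid X\subseteq\mathcal{L}\}$ is a closure system, so that $\Cons{\bigcup_i \mathcal{L}_{E_i}}$ is the supremum of the $\Cons{\mathcal{L}_{E_i}}$ in that lattice.

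Concretely, to see that $E_{max}$ is an upper bound I would note that $\context_{E_i}\leq_g \bigvee_j \context_{E_j}$ holds for every $i$ because the generalized supremum dominates each of its arguments, while $\Cons{\mathcal{L}_{E_i}}\subseteq\Cons{\bigcup_j\mathcal{L}_{E_j}}$ follows from $\mathcal{L}_{E_i}\subseteq\bigcup_j\mathcal{L}_{E_j}$ and monotonicity of $\Cons{\cdot}$; hence $E_i\leq E_{max}$ for all $i$. For the least-upper-bound property I would take any expert $E'=(\context',\Cons{\mathcal{L}'})$ with $E_i\leq E'$ for all $i$. Then each $\context_{E_i}\leq_g\context'$, so $\bigvee_j\context_{E_j}\leq_g\context'$ by the supremum property of $\lor_g$; and each $\mathcal{L}_{E_i}\subseteq\Cons{\mathcal{L}_{E_i}}\subseteq\Cons{\mathcal{L}'}$, whence $\bigcup_j\mathcal{L}_{E_j}\subseteq\Cons{\mathcal{L}'}$ and $\Cons{\bigcup_j\mathcal{L}_{E_j}}\subseteq\Cons{\Cons{\mathcal{L}'}}=\Cons{\mathcal{L}'}$. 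Thus $E_{max}\leq E'$, so $E_{max}$ is indeed the supremum.

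For the second claim I would exploit that, by \cref{strategy:maximum_knowledge}, we have $\phi_{max}(A\implies B,\mathcal{E},EI)=EI_S(A\implies B,E_{max})$, i.e. running the exploration under $\phi_{max}$ is literally running the single-expert exploration with the artificial expert $E_{max}$ and the standard interaction $EI_S$ (cf. \cref{strategy:single_expert}). By \cref{fact:attribute_exploration_results_in_maximum_knowledge_w_r_t_expert_knowledge} that exploration terminates with the maximal information obtainable with respect to the knowledge of $E_{max}$. Combining this with the first claim, that $E_{max}$ carries precisely the combined knowledge of the whole group, yields that the result is maximal with respect to everything the experts $\mathcal{E}$ collectively know.

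The step I expect to be the genuine obstacle is pinning down the phrase ``maximal obtainable knowledge from the experts $\mathcal{E}$'', since the formal order comparing exploration results is only introduced later. The clean argument above establishes maximality relative to $E_{max}$; upgrading this to maximality over all admissible collaboration strategies requires the observation that any strategy's answers are constrained to be consistent with, and justified by, the experts' actual knowledge, so that no strategy can leverage more than the combined knowledge $E_{max}$. I would therefore make the informal claim precise by arguing that whatever a sound strategy can accept, reject, or leave unknown is already covered by $E_{max}$ under $EI_S$, and that $\phi_{max}$ attains this bound.
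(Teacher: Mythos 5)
Your proposal is correct and follows essentially the same route as the paper: the paper's proof likewise splits into (i) $E_{max}$ being the supremum of the group's knowledge (which it dismisses as holding ``by definition'', whereas you verify the join property component-wise via \cref{lem:lattices_are_the_same} and the closure-system structure of $\{\Cons{X}\mid X\subseteq\mathcal{L}\}$) and (ii) an appeal to \cref{fact:attribute_exploration_results_in_maximum_knowledge_w_r_t_expert_knowledge} for the single artificial expert under $EI_S$. Your closing observation --- that ``maximal obtainable knowledge'' is only made precise relative to $E_{max}$ and that a sound strategy cannot exceed the combined knowledge --- is a fair reading of the same informal step the paper also leaves implicit.
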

\begin{proof}
  The maximal expert represents the maximal knowledge of the group of
  experts by definition as their supremum in the set of possible
  experts for the universe
  $\{ E \mid E \text{ expert for } \context^U\}$.  The attribute
  exploration with a single expert results in the maximum of
  obtainable knowledge from the expert,
  cf. \cref{fact:attribute_exploration_results_in_maximum_knowledge_w_r_t_expert_knowledge}.
  Therefore $\phi_{max}$ results in the maximum of obtainable
  knowledge from the group of experts $\{E_1,\ldots,E_n\}$.
\end{proof}
Even though \cref{strategy:maximum_knowledge} is a valid collaboration
strategy it does not represent a realistic way of collaboration if the
group of experts becomes large.  Further, if we ignore the
interpretation of this strategy as sitting together and discussing the
questions, we realize that it is highly unlikely that an expert with
reasonably large knowledge of any domain can reproduce all facts about
it at once.  Nonetheless, this collaboration strategy serves as a
means to evaluate other collaboration strategies as the maximal expert
represents the maximum of obtainable knowledge from the group of
experts, cf. \cref{thm:max_expert_strategy_is_optimal}.

There are of course some more realistic collaboration strategies.  In
the following we present the \emph{broadcast
  strategy}~(\cref{strategy:broadcast}), the \emph{iterative
  strategy}~(\cref{strategy:iterative_asking_experts}) and the
\emph{random selection strategy}~(\cref{strategy:random_selection}),
each of which models one way of asking a group of experts.
The idea of the \emph{broadcast strategy} is to ask all experts in a
group and combine the answers to form the best collaborative answer
possible.  This requires the strategy to interact with every expert
for every question but the experts can be asked independently from
each other and thus this can be done simultaneously.  The time
required to answer a question using this strategy is the longest time
it takes for any of the experts to answer.
\begin{strategy}[broadcast]
  \label{strategy:broadcast}
  Let $\{E_1,\ldots,E_n\}$ be a group of experts for the universe
  $\context^U$ with the expert interaction $EI$ for all experts.
  Given a question `Does $A\implies B$ hold in the universe?'  the
  \emph{broadcast strategy} asks all experts at once and collects all
  the answers, i.e., the results of all interactions
  $EI(A\implies B, E_i)$.  Then it combines the answers to form the
  best possible response, i.e., the \emph{broadcast strategy} combines
  the attributes known to follow from the premise and it combines all
  counterexamples.
\end{strategy}

A more sensible approach in terms of the number of required expert
interactions is to order the experts randomly for every question and
relay the question to one expert at a time until either one expert
accepted or rejected the attribute implication or all experts have
been consulted, cf. \cref{strategy:iterative_asking_experts}.
Basically this strategy increases the average time required to answer
a question in order to reduce the average number of expert
interactions required.
That way the amount of work required from every expert is reduced and
we presume that the result in terms of how much knowledge is obtained
stays about the same.  Note that the amount of example knowledge might
be reduced since in general not all known counterexamples to an
implication will be found if not all experts are consulted.  Further
note that the implication knowledge should not be impacted by the
iterative approach, since no implication knowledge of not consulted
experts is lost if an implication is accepted.  If we know about what
the experts know it seems reasonable to order the experts in an
optimal way for every question, again this is a subject for future
research.
\begin{strategy}[iterative]
  \label{strategy:iterative_asking_experts}
  Let $\{E_1,\ldots,E_n\}$ be a group of experts for the universe
  $\context^U$ with the standard expert interaction $EI_S$ for all
  experts.  In \cref{alg:iterative_collaboration_strategy} we present
  the \emph{iterative strategy} $\phi_{iter}$ that asks questions
  iteratively to reduce the number of interactions needed per question
  when exploring the domain.
        
        \begin{algorithm}[t]
          \SetKwComment{Comment}{}{} \SetKw{Kwin}{in}
          \DontPrintSemicolon \SetAlgoLined \KwIn{$A\implies B$,
            $\{E_1,\ldots,E_n\}$} \KwOut{Collaborative Answer}
          $Y := \emptyset$ \tcc*{collect the set of attributes known
            to follow from $A$} \For(\tcc*[f]{iterate the group of
            experts}){$E$ \Kwin $\{E_1,\ldots,E_n\}$}{
            $R := EI_S(A\implies B, E)$ \tcc*{ask the expert and
              collect the response}
            \lIf{$R=(\ttrue,\emptyset)$}{\KwRet$(\ttrue,\emptyset)$}
            \lIf{$R=(\tfalse,
              \context_C)$}{\KwRet$(\tfalse, \context_C)$}
            \lIf{$R=(\tunknown, Z)$}{$Y:=Y\cup (B\setminus Z)$} }
          \If(\tcc*[f]{if it is known conjointly that $B$ follows from
            $A$}){$Y=B$}{\KwRet$(\ttrue,\emptyset)$}
          \KwRet$(\tunknown, B\setminus Y)$
          \caption{Collaboration Strategy $\phi_{iter}$ using
            iterative questions}
          \label{alg:iterative_collaboration_strategy}
        \end{algorithm}
    
        The answers produced by $\phi_{iter}$ are consistent with the
        domain: An implication is only rejected if an expert knows a
        counterexample, it is accepted if either an expert accepted it
        or the group of experts knows that the conclusion follows from
        the premise.  Otherwise the response is `unknown'.
    
      \end{strategy}
      \begin{rem}
        An extensive example of a collaborative exploration using the
        \emph{iterative strategy} can be found in the appendix (see
        \cref{example:appendix}).  Subject of the exploration is a
        subset of the attributes of the \emph{Disciplines of the
          Summer Olympic Games 2020} context from the introduction
        (see \cref{sec:introduction}).  The complete context can also
        be found in the
        \hyperref[appendix:full_example_context]{appendix}.
      \end{rem}

      Another simple approach to adapt the attribute exploration to a
      group of experts is to randomly select a new expert to ask each
      time the exploration algorithm poses a question.  The
      \emph{random selection strategy} describes this form of
      collaboration where every expert is equally likely to be
      selected.  This strategy is useful to balance the amount of
      interaction required from each expert.  It further reduces the
      average time needed to answer a question compared to the
      \emph{broadcast strategy} because here the expert that takes the
      longest to answer is not always asked.  However, in general the
      \emph{random selection strategy} is strictly worse than the
      \emph{broadcast} in terms of obtained knowledge.  An exception
      is the case where all experts have the same knowledge.  Then the
      \emph{random selection strategy} results in the same amount of
      obtained knowledge but, compared to the \emph{broadcast
        strategy}, with fewer expert interactions and after a shorter
      period of time.
      \begin{strategy}[random selection]
        \label{strategy:random_selection}
        Given a group of experts $\mathcal{E}:=\{E_1,\ldots,E_n\}$,
        $n\geq 1$, for a universe $\context^U$ and an interaction
        strategy $EI$.  Consider the \emph{random selection strategy}
        $\phi_{rand}$ that randomly asks one of the experts and simply
        gives the answer as response
        \[
          \phi_{rand}(A\implies B, \mathcal{E}, EI) :=
          \phi_{single}(A\implies B, \{E_i\}, EI) \text{ where } E_i
          \sim \mathrm{Uniform}(\{E_1,\ldots,E_n\}).
        \]
        $\mathrm{Uniform}(\{E_1,\ldots,E_n\})$ denotes randomly
        picking an expert with equal probability.
      \end{strategy}
      Note that instead of the $\mathrm{Uniform}$ distribution we
      could also employ other discrete probability distributions to
      assign weights to the experts.

      \begin{rem}
        \label{rem:accepting_implications_with_the_ususal_expert_can_be_complex}
        The standard expert interaction exhibits an interaction
        complexity issue when multiple experts together know that an
        implication is valid but none of the experts alone knows this.
        \cref{fig:context_example_inferring_implications_is_hard}
        gives an example of a context where two experts can have
        adversely distributed implication knowledge.  When using the
        standard expert interaction it can happen that many
        interactions are required to confirm a valid implication.

        The set of valid implications in $\context$
        (cf. \cref{fig:context_example_inferring_implications_is_hard})
        is $\Imp[\context] = \{a\implies bc,\ b\implies c\}$.
        Consider two experts $\{E_1, E_2\}\eqqcolon \mathcal{E}$ with
        implicational knowledge $\mathcal{L}_1 = \{ a \implies b\}$
        and $\mathcal{L}_2 = \{ b \implies c\}$ and the standard
        expert interaction $EI_S$.  If the question `Does $a$ imply
        $c$?' is posed none of the experts could agree and even if
        they were allowed to work together using the standard expert
        interaction none of them could report anything since $EI_S$
        only allows to report parts of the conclusion that are known
        to follow.
    
        In comparison if the question `Does $a$ imply $bc$?' is posed
        to both experts, at least expert $E_1$ would report that $b$
        is known to follow from the premise.  To also obtain that $c$
        follows, the expert $E_2$ needs to be consulted a second time
        about the question whether the implication $ab\implies c$ is
        true.  Then $E_2$ would accept this implication to be valid
        and therefore confirm that the implication $a\implies c$ is
        valid.
    
\begin{figure}[t]
  \centering
  \begin{cxt}
    \cxtName{$\context$} \cxtNichtKreuz{} \att{a} \att{b} \att{c}
    \obj{xxx}{1} \obj{.xx}{2} \obj{..x}{3} \obj{...}{4}
  \end{cxt}
  \caption{An example of a context where implicational knowledge can
    be adversely distributed between experts.}
  \label{fig:context_example_inferring_implications_is_hard}
\end{figure}

Clearly this poses a problem for defining collaboration strategies.
The example shows that there are cases when an implication can be
accepted by a group of experts $\mathcal{E}$ even though no single
expert alone can accept it but that this requires potentially repeated
questioning of the experts for every single question posed by the
attribute exploration.  The first part of the example shows that there
exist implications that cannot be accepted using the standard expert
interaction even though they would be accepted by the maximal expert
$E_1 \lor E_2$.  Furthermore, it usually requires every expert to be
consulted if nothing about the experts knowledge is known.
    
A quick generalization of the previous example shows that there are
cases where the number of required interactions with the experts
$\mathcal{E}$ is at least $|\mathcal{E}| \cdot
(|M|-1)$$\colon$ Let
$\context$ be a context with
$M=\{a_0,\ldots,a_n\}$ where the implications $a_0 \implies a_1\ldots
a_n$ and $a_i \implies a_{i+1}$ with
$i\in\{0,\ldots,n-1\}$ are valid.  Clearly there exist contexts where
these implications are valid, for example, the empty context on the
attribute set
$M$ where every implication is valid or a sufficiently large ordinal
scale, cf. \cite{GanterWille1999}.  Consider a group of experts
$\mathcal{E} =\{E_1,\ldots, E_{m}\}$, $m\geq n$, for
$\context$ such that the expert $E_i$ knows the implication $a_{i-1}
\implies a_{i}$ for $1\leq i\leq
n$ and has no implicational knowledge for $n< i \leq m$.
To answer the question `Does $a_0$ imply $a_1\ldots
a_n$?' every expert needs to be consulted
$n$ times.  The first iteration results in $a_0\implies
a_1$ which is turned into the question `Does $a_0a_1$ imply $a_1\ldots
a_n$?' for the second iteration and so on.  After
$n$ iterations it is finally known that the implication $a_0\implies
a_1\ldots a_n$ is valid.  In each iteration, all
$m$ experts need to be consulted to assure that no information was
missed.  Hence the number of interactions needed to accept an
implication can reach
$|\mathcal{E}|\cdot(|M|-1)$.  Even in the best case when asking
iteratively and always the right expert first it would require
$n-1$ interactions to accept the implication.
    
Note that this is not only a problem due to the definition of the
standard expert interaction but a fundamental problem that arises when
exploring distributed implication knowledge.  From a theoretical point
we could simply redefine the notion of expert interaction and let
experts respond with all the implicational knowledge they have.
However, from a real world perspective (and sticking to our Olympic
Disciplines example from \cref{sec:introduction}) it would be like
asking ``Do all disciplines that have \emph{female-only events} also
have \emph{male-only events}?''  and to receive an answer like ``No,
but if a discipline holds \emph{more than ten events} it was
\emph{part of at least eight Olympic Games}'' on the off-chance that
those facts are somehow related.  Clearly such an approach does not
make much sense.
    
\end{rem}
\begin{cor}
  Given a universe $\context^U=\GMI$ with $|M|=n+1$.  Let
  $\{E_1,\ldots,E_m\}$ be a group of experts for $\context^U$ and the
  interaction with the experts is the standard expert interaction
  $EI_S$.  The worst case number of interactions required to accept a
  single valid implication is at least $m\times n$.
\end{cor}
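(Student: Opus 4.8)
The plan is to use the explicit family of instances constructed in \cref{rem:accepting_implications_with_the_ususal_expert_can_be_complex} as the witness for the lower bound. First I would fix the universe $\context^U$ on the attribute set $M=\{a_0,\ldots,a_n\}$ (so that $|M|=n+1$) in which both the ``long'' implication $a_0\implies a_1\ldots a_n$ and every ``chain link'' $a_i\implies a_{i+1}$ for $i\in\{0,\ldots,n-1\}$ are valid; as noted in the remark such a context exists, e.g.\ the empty context on $M$ or a sufficiently large ordinal scale. I would then distribute the chain links among the $m$ experts by letting $E_i$ know exactly $a_{i-1}\implies a_i$ for $1\le i\le n$, giving the remaining experts ($n<i\le m$) no implicational knowledge, and leaving all example contexts empty. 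Since each assigned implication lies in $\mathcal{L}=\Imp[\context^U]$, this is legitimate expert knowledge, so the configuration is admissible.

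Next I would analyse how many standard interactions are forced in order to accept the valid implication $a_0\implies a_1\ldots a_n$. The crucial observation is that $EI_S(A\implies B,E)$ only ever returns the portion $B\setminus\langle A\rangle_{\mathcal{L}_E}$ of the conclusion that the expert can derive from its own implications. With the above distribution, no single expert's implication theory closes the current premise by more than one new attribute, so a single round of questioning with premise $\{a_0,\ldots,a_k\}$ can enlarge the confirmed conclusion by at most one attribute, namely $a_{k+1}$, contributed by the unique expert $E_{k+1}$ holding $a_k\implies a_{k+1}$. Hence confirming the whole conclusion requires the premise to grow one attribute at a time through $n$ successive rounds, matching the description in the remark where ``Does $a_0$ imply $a_1\ldots a_n$?'' is repeatedly re-posed with an augmented premise.

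I would then count the per-round cost. Because a correct collaboration strategy has, by assumption, no prior information about which expert holds which link, it cannot know in advance which expert (if any) will extend the premise; to guarantee that no confirmable attribute is overlooked it must consult all $m$ experts in each round. Combining the $n$ rounds with the $m$ interactions per round yields at least $m\cdot n=|\mathcal{E}|\cdot(|M|-1)$ interactions, which is exactly the asserted bound.

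The hard part will be formalising the per-round lower bound, i.e.\ arguing rigorously that a worst-case execution really does force all $m$ experts to be queried in every one of the $n$ rounds, rather than short-cutting the process. This amounts to pinning down, in the adversarial ordering, that skipping any expert risks missing the one link needed to advance the premise, so that any strategy guaranteeing acceptance must in the worst case pay the full $m\times n$. The remaining steps---existence of the context, validity of the distributed expert knowledge, and the one-attribute-per-round growth---are routine given \cref{rem:accepting_implications_with_the_ususal_expert_can_be_complex} and the definition of $EI_S$.
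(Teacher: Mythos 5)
Your proposal is correct and follows essentially the same route as the paper: the paper's proof of this corollary is a one-line reference to \cref{rem:accepting_implications_with_the_ususal_expert_can_be_complex}, whose chain-of-implications construction and $n$-rounds-times-$m$-experts counting argument you reproduce faithfully (and in somewhat more detail, including flagging that the ``all $m$ experts must be consulted each round'' step is the part the paper also leaves informal).
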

\begin{proof}
  This directly follows from
  \cref{rem:accepting_implications_with_the_ususal_expert_can_be_complex}.
\end{proof}

One possible solution to mitigate this problem is to allow the
collaboration strategies to report with more than just `yes', `no' or
`unknown' back to the attribute exploration.  Then the collaboration
strategy could collect the known implications found during the
interactions with the experts and add the implications to the reply.
This might reduce repetitive interactions by preventing some questions
where the answer can then be inferred directly by the exploration
algorithm.  However, this would require modifying the attribute
exploration to allow using such additional information.
\subsection{Comparing and Evaluating Collaboration Strategies}
\label{subsec:comparing-and-evaluating-collab-strategies}
One important topic that we have touched before but not explicitly
discussed yet is the question of how to evaluate and compare
collaboration strategies.  We have already mentioned three criteria
that seem important: The portion of knowledge that is obtained,
the (average) time needed and the experts effort required to
obtain it.

First of all, note that the result of a collaborative exploration is
very much dependant on the group of experts.  Therefore, the maximal
expert and the maximal collaboration strategy are useful tools to
evaluate the portion of obtained knowledge relative to the possible
maximum.

The result of an attribute exploration consists of the set of known
valid implications and the incomplete context of counterexamples.
Hence, the result of an exploration is an element of the product
lattice of implications and examples and can be compared in the same
way as expert knowledge.  However, this also means that different
exploration results may be incomparable in this order relation.  A
path to circumvent this problem could be to define some metric on the
elements of the product lattice to capture the relative knowledge in a
single number and make it easily comparable.  Though, how to define
such a metric is not obvious to the authors.

Another difficulty inherent in the definition of collaboration
strategy as algorithm is that random algorithms can be used, which
implies that the answer to a question is not guaranteed to be
deterministic.

An example for such a strategy is \cref{strategy:random_selection}. It
can be used to show that strategies which incorporate random elements
are even more difficult to evaluate:

\begin{exmpl}
  \label{example:random_strategies_cause_problems}
  Assume that we want to explore a universe $\context^U$ with two
  experts for the domain $E_1=\expert{1}$ and $E_2=\expert{2}$
  utilizing the collaboration strategy $\phi_{rand}$,
  cf. \cref{strategy:random_selection}.  Let the expert $E_1$ be
  all-knowing, i.e., $\context_{1} = \context^U$ and
  $\mathcal{L}_1 = \Imp[\context^U]$, and let the expert $E_2$ be
  completely ignorant, i.e.,
  $\context_{2} =(\emptyset,M,\values,I_\emptyset)$ and
  $\mathcal{L}_2 = \emptyset$.
    
  If we explore the universe using the \emph{random selection
    strategy} then the exploration result can vary extremely depending
  on the randomness.  It could be that no knowledge at all was
  discovered if the strategy always selected the expert $E_2$, it
  could be a completely explored domain if the strategy always
  selected the expert $E_1$ or it could be anything in between.
\end{exmpl}

Another important consideration when evaluating a collaboration
strategy is how much effort by the group of experts is needed.  This
can be measured, for example, by counting the required interactions
per expert and comparing based on total, average, maximum or other
suitable metrics.  Note that minimizing expert effort alone is
problematic.  An optimal strategy in this regard is the \emph{ignorant
  strategy}, cf. \cref{strategy:ignorant_strategy}, which does not
bother interacting with the experts, always answers `unknown' and
clearly is not what we want.

Now we give a brief ranking of the three more realistic collaboration
strategies from \cref{subsec:discussion_of_collaboration_strategies}
(\emph{broadcast}, \emph{iterative} and \emph{random selection}).  We
compare them based on \emph{knowledge obtained}~(K), \emph{time
  needed}~(T) and \emph{number of expert interactions}~(I):
\begin{itemize}
\item[K:]\label{itemize_K} We presume that the \emph{broadcast} and
  the \emph{iterative} strategy result in about the same obtained
  knowledge
  (cf. \cref{subsec:discussion_of_collaboration_strategies}). However,
  the \emph{broadcast} strategy will result in a more extensive set of
  counterexamples. The \emph{random selection} strategy is difficult
  to compare to the two other strategies because of its randomness
  (cf. \cref{example:random_strategies_cause_problems}), though we
  expect this strategy to obtain far less knowledge in general.
    
\item[T:] The \emph{random selection} strategy takes the least time,
  followed by the \emph{broadcast} strategy and the worst in terms of
  time needed is the \emph{iterative} strategy due to its sequential
  expert interactions.
    
\item[I:] The \emph{random selection} strategy requires the least
  number of expert interactions (one per question), the
  \emph{broadcast} strategy requires the most number of interactions
  (one per expert per question) and the \emph{iterative} strategy
  requires a number of interactions in between depending on the order
  in which the experts are asked and how knowledgable the experts are.
\end{itemize}

It has become apparent that comparing collaboration strategies is a
complex task where further research is required.  Defining what
characterizes a `good' collaboration strategy is hard.  As a rule of
thumb it balances the knowledge obtained, the time needed and the
effort required from the experts.

\section{Conclusions and Outlook}
\label{sec:conclusion_and_outlook}
We have extended the theory of attribute exploration for incomplete
knowledge to work in a setting of multiple experts with incomplete
knowledge of a domain.  To this end we have formalized expert
knowledge as a tuple of (possibly incomplete) examples and valid
implications and formalized a notion of interaction with expert
knowledge.  Further we have defined a collaboration strategy as an
algorithm that takes an implicational question and a group of experts
as input and returns an answer that fits the scheme required by the
attribute exploration for incomplete knowledge in
\cite{holzer2001dissertation}.  Orders on incomplete contexts and
expert knowledge have been introduced to facilitate comparability of
the results of attribute explorations by multiple experts.  Some
collaboration strategies and ways to compare such strategies in
general have been discussed.  Numerous questions and avenues for
further research have been identified.

In particular, we will develop further characterizations of `good'
collaboration strategies.  One problem that should be tackled in the
future
is 
to find a metric which allows for an easy comparison of the knowledge
discovered as result of attribute exploration with some collaboration
strategy.  Basically this implies defining a metric on the lattice of
possible exploration results.  Other avenues for future research could
deal with: considering changing sets of experts, different modes of
interaction, more specifically defined experts (such as the experts in
\cite{conf/iccs/HanikaZ18}) or assuming knowledge about the knowledge
of experts to reduce the amount of interactions required.


\sloppy \printbibliography

\newpage
\section{Appendix}
\label{sec:appendix}
\subsection{Disciplines of the Summer Olympic Games 2020 Context}
\label{appendix:full_example_context}
\setlength{\LTleft}{0pt} \setlength{\LTright}{0pt}

\begin{cxtl}
  \cxtName{} \cxtNichtKreuz{}
    
  \atr{$\geq$ 5 events} \atr{$\geq$ 10 events} \atr{$\geq$ 20 events}
  \atr{athletic sport} \atr{ball game} \atr{combat sport} \atr{female
    only events} \atr{has paralympic equivalent} \atr{individual
    competition} \atr{indoor events} \atr{male only events} \atr{mixed
    events} \atr{open events} \atr{outdoor events} \atr{part of $\geq$
    8 olympics} \atr{part of $\geq$ 16 olympics} \atr{part of $\geq$
    24 olympics} \atr{team competition} \atr{water sport}
  \obj{...x..x..x....x..xx}{Aquatics -- Artistic Swimming}
  \obj{x..x..x.xxx...xxxxx}{Aquatics – Diving}
  \obj{...x..x.x.x..x....x}{Aquatics – Marathon Swimming}
  \obj{xxxx..xxxxxx..xxxxx}{Aquatics – Swimming}
  \obj{...xx.x..xx...xxxxx}{Aquatics – Water Polo}
  \obj{x..x.xxxx.xx.xxx.x.}{Archery}
  \obj{xxxx..xxx.xx.xxxxx.}{Athletics}
  \obj{x..x..xxxxxx..x..x.}{Badminton}
  \obj{...xx.x...x..x...x.}{Baseball/Softball}
  \obj{...xx.x..xx......x.}{Basketball – 3x3}
  \obj{...xx.xx.xx...xx.x.}{Basketball – Basketball}
  \obj{xx.x.xx.xxx...xxx..}{Boxing} \obj{...x..x.x.x..xx...x}{Canoe –
    Slalom} \obj{xx.x..xxx.x..xxx.xx}{Canoe – Sprint}
  \obj{...x..x.x.x..x.....}{Cycling – BMX Freestyle}
  \obj{...x..x.x.x..x.....}{Cycling – BMX Racing}
  \obj{...x..x.x.x..x.....}{Cycling – Mountain Bike}
  \obj{...x..xxx.x..xxxx..}{Cycling – Road}
  \obj{xx.x..xxxxx...xxxx.}{Cycling – Track}
  \obj{.......xx...xxxxxx.}{Equestrian – Dressage}
  \obj{........x...xxxxxx.}{Equestrian – Eventing}
  \obj{........x...xxxxxx.}{Equestrian – Jumping}
  \obj{xx...xxxxxx...xxxx.}{Fencing}
  \obj{...xx.xx..x..xxxxx.}{Football} \obj{....x.x.x.x..x.....}{Golf}
  \obj{xx.x..x.xxx...xxxx.}{Gymnastics – Artistic}
  \obj{...x..x.xx....x..x.}{Gymnastics – Rhythmic}
  \obj{...x..x.xxx........}{Gymnastics – Trampoline}
  \obj{...xx.x..xx...x..x.}{Handball}
  \obj{...xx.x...x..xxxxx.}{Hockey} \obj{xx.x.xxxxxxx..x..x.}{Judo}
  \obj{...x.xx.xxx........}{Karate – Kata}
  \obj{x..x.xx.xxx........}{Karate – Kumite}
  \obj{...x.xx.x.x..xxxx.x}{Modern Pentathlon}
  \obj{xx.x..xxx.x..xxxxxx}{Rowing} \obj{...xx.xx..x..x...x.}{Rugby –
    Rugby Sevens} \obj{xx....x.x.xx.xxxxxx}{Sailing}
  \obj{xx...xxxx.xx.xxxxx.}{Shooting}
  \obj{...x..x.x.x..x.....}{Skateboarding}
  \obj{...x..x.xxx........}{Sport Climbing}
  \obj{...x..x.x.x..x....x}{Surfing} \obj{x..xx.xxxxxx..x..x.}{Table
    Tennis} \obj{x..x.xxxxxx........}{Taekwondo}
  \obj{x..xx.xxx.xx.xxx.x.}{Tennis}
  \obj{...x..x.x.xx.x...xx}{Triathlon}
  \obj{...xx.x...x..x...x.}{Volleyball – Beach Volleyball}
  \obj{...xx.xx.xx...x..x.}{Volleyball – Volleyball}
  \obj{xx.x..xxxxx...xxx..}{Weightlifting}
  \obj{xx.x.xx.xxx...xxx..}{Wrestling – Freestyle}
  \obj{x..x.x..xxx...xxx..}{Wrestling – Greco Roman}
\end{cxtl}
The information for the \emph{Disciplines of the Summer Olympic Games
  2020} context was obtained from \url{https://tokyo2020.org/},
\url{https://www.olympic.org/tokyo-2020} and
\url{https://en.wikipedia.org/wiki/Olympic_sports}.

Note that the \emph{number of events} ($\geq$ 5, $\geq$ 10 and $\geq$
20) and the\emph{ number of Olympics that a discipline was of} ($\geq$
8, $\geq$ 16 and $\geq$ 24) are ordinally scaled attributes,
cf. \cite{GanterWille1999}.

\vspace{5ex}
\subsection{Example of a collaborative exploration with three experts}
\captionsetup[figure]{labelformat=empty}
\begin{example}\label{example:appendix}
  In the following we give an example of attribute exploration with
  multiple experts using the \emph{iterative collaboration strategy}
  (\cref{strategy:iterative_asking_experts}).  We explore a subset of
  the attributes of the \emph{Olympic Disciplines 2020},
  cf. \cref{appendix:full_example_context}, namely the attributes
  \emph{$\geq$ 5 events}, \emph{$\geq$ 10 events}, \emph{female only
    events}, \emph{male only events} and \emph{part of $\geq$ 8
    olympics}.
  The collaboration strategy makes use of three (ficticious) experts.
    
  The \textbf{first expert} $E_1=\expert{1}$ prefers Olympic
  Disciplines with a long tradition in the Olympic Games.  She knows
  that \emph{all disciplines with more than ten events also have more
    than five events and are part of at least eight olympics}, i.e.,
  \emph{$\mathcal{L}_1 = \{\{\geq \text{ 10 events}\} \rightarrow
    \{\geq \text{ 5 events}, \text{ part of }\geq \text{ 8
      olympics}\}\}$}.
    
  The \textbf{second expert} $E_2=\expert{2}$ is a fan of \emph{water
    sport} and likes to watch \emph{mixed events}.  She knows that
  \emph{all discipline with more than five events also have male only
    events}, i.e.,
  \emph{$\mathcal{L}_2 = \{ \{\geq\text{ 5 events }\}\rightarrow
    \{\text{ male only events}\}\}$}.
    
  The \textbf{third expert} $E_3=\expert{3}$ likes combat sports.  She
  only knows that \emph{all disciplines with more than ten events also
    have more than five events}, i.e.,
  \emph{$\mathcal{L}_3 = \{\{\geq \text{ 10 events}\} \rightarrow
    \{\geq\text{ 5 events}\}\}$}.
    
  The following contexts represent the example knowledge of the three
  experts:
    %
    
    %
    %
    %
    
    \begin{center}
      \footnotesize
        
        \begin{minipage}{0.49\textwidth}
          \centering \textbf{Expert 1} \vspace{1ex}
            
            \begin{cxt}
              \cxtName{$\context_1$} \cxtNichtKreuz{} \atr{$\geq$ 5
                events} \atr{$\geq$ 10 events} \atr{female only
                events} \atr{male only events} \atr{part of $\geq$ 8
                olympics}
              \obj{x.xxx}{Aq. -- Diving} \obj{xxxxx}{Aq. -- Swimming}
              \obj{..xxx}{Aq. -- Water Polo} \obj{xxxxx}{Athletics}
              \obj{xxxxx}{Boxing} \obj{..xxx}{Cycling -- Road}
              \obj{xxxxx}{Cycling -- Track} \obj{....x}{Equestrian --
                Dressage} \obj{....x}{Equestrian -- Eventing}
              \obj{....x}{Equestrian -- Jumping} \obj{xxxxx}{Fencing}
              \obj{..xxx}{Football} \obj{xxxxx}{Gymnastics --
                Artistic} \obj{..xxx}{Hockey} \obj{..xxx}{Modern
                Pentathlon} \obj{xxxxx}{Rowing} \obj{xxxxx}{Sailing}
              \obj{xxxxx}{Shooting} \obj{xxxxx}{Weightlifting}
              \obj{xxxxx}{Wrestling -- Freestyle}
              \obj{x..xx}{Wrestling -- Greco Roman}
            \end{cxt}
            
            \vspace{1ex}
            {$\mathcal{L}_1 = \{\{\geq \text{ 10 events}\} \rightarrow
              \{\geq \text{ 5 events}, \text{ part of }\geq \text{ 8
                olympics}\}\}$}
          \end{minipage}
          \begin{minipage}{0.49\textwidth}
            \centering \textbf{Expert 2} \vspace{1ex}
            
            \begin{cxt}
              \cxtName{$\context_2$} \cxtNichtKreuz{} \atr{$\geq$ 5
                events} \atr{$\geq$ 10 events} \atr{female only
                events} \atr{male only events} \atr{part of $\geq$ 8
                olympics}
              \obj{..x.x}{Aq. -- Artistic Swimming} \obj{x.xxx}{Aq. --
                Diving} \obj{..xx.}{Aq. -- Marathon Swimming}
              \obj{xxxxx}{Aq. -- Swimming} \obj{..xxx}{Aq. -- Water
                Polo}
                
              \obj{x.xxx}{Archery} \obj{xxxxx}{Athletics}
              \obj{x.xxx}{Badminton} \obj{..xxx}{Canoe -- Slalom}
              \obj{xxxxx}{Canoe -- Sprint} \obj{xxxxx}{Judo}
              \obj{..xxx}{Modern Pentathlon} \obj{xxxxx}{Rowing}
              \obj{xxxxx}{Sailing} \obj{xxxxx}{Shooting}
              \obj{..xx.}{Surfing} \obj{x.xxx}{Table Tennis}
              \obj{x.xxx}{Tennis} \obj{..xx.}{Triathlon}
            \end{cxt}
            
            \vspace{1ex}
            {$\mathcal{L}_2 = \{ \{\geq\text{ 5 events}\}\rightarrow
              \{\text{male only events}\}\}$}
          \end{minipage}
        
        \begin{minipage}{0.49\textwidth}
          \centering \textbf{Expert 3} \vspace{1ex}
            
            \begin{cxt}
              \cxtName{$\context_3$} \cxtNichtKreuz{} \atr{$\geq$ 5
                events} \atr{$\geq$ 10 events} \atr{female only
                events} \atr{male only events} \atr{part of $\geq$ 8
                olympics} \obj{x.xxx}{Archery} \obj{xxxxx}{Boxing}
              \obj{xxxxx}{Fencing} \obj{xxxxx}{Judo}
              \obj{..xx.}{Karate -- Kata} \obj{x.xx.}{Karate --
                Kumite} \obj{..xxx}{Modern Pentathlon}
              \obj{xxxxx}{Shooting} \obj{x.xx.}{Taekwondo}
              \obj{xxxxx}{Wrestling -- Freestyle}
              \obj{x..xx}{Wrestling -- Greco Roman}
            \end{cxt}
            
            \vspace{1ex}
            {$\mathcal{L}_3 = \{\{\geq \text{ 10 events}\} \rightarrow
              \{\geq\text{ 5 events}\}\}$}
          \end{minipage}
          \vspace{12ex}
        \end{center}

        For the purpose of this example the order in which the experts
        are asked is always the same: First $E_1$, then $E_2$ and last
        $E_3$.  We list the questions posed by the attribute
        exploration algorithm, all interactions with the experts and
        the relevant parts taking place in the collaboration strategy.
    
        \
        \\
        \noindent\textbf{Question 1} posed by the attribute
        exploration: \newline ``Do all Disciplines have more than five
        events, more than ten events, female only events, male only
        events and have been part of at least eight Olympic Games?''.
    
        \
        \\
        The corresponding short implicational form of this question
        is:
    $$ 
    {\footnotesize \emptyset \rightarrow \{\geq \text{5
        events},\geq\text{10 events},\text{female only
        events},\text{male only events},\text{part of }\geq \text{8
        olympics}\} \ ?  }
    $$
    From now on we use the short form of the questions to improve
    readability.
    
    \
    \\
    The \emph{collaboration strategy} then poses \textbf{Question 1}
    to the Experts: \newline \textbf{Interaction with $E_1$}: \newline
    The expert knows this to be false and responds with
    $(\text{false }, \context_{Q1})$.
    
    \begin{figure}
      \setlength{\belowcaptionskip}{-25pt}
      \begin{minipage}{0.49\textwidth}
        \centering \textbf{Counterexample Question 1} \vspace{1ex}
            
            \begin{cxt}
              \cxtName{$\context_{Q1}$} \cxtNichtKreuz{} \atr{$\geq$ 5
                events} \atr{$\geq$ 10 events} \atr{female only
                events} \atr{male only events} \atr{part of $\geq$ 8
                olympics} \obj{x.xxx}{Aq. – Diving} \obj{..xxx}{Aq. –
                Water Polo} \obj{..xxx}{Cycling – Road}
              \obj{....x}{Equestrian – Dressage}
              \obj{....x}{Equestrian – Eventing}
              \obj{....x}{Equestrian – Jumping} \obj{..xxx}{Football}
              \obj{..xxx}{Hockey} \obj{..xxx}{Modern Pentathlon}
              \obj{x..xx}{Wrestling – Greco Roman}
            \end{cxt}
            
            \vspace{1ex}
          \end{minipage}
          \begin{minipage}{0.49\textwidth}
            \centering \textbf{Counterexample Question 2} \vspace{1ex}
            
            \begin{cxt}
              \cxtName{$\context_{Q2}$} \cxtNichtKreuz{} \atr{$\geq$ 5
                events} \atr{$\geq$ 10 events} \atr{female only
                events} \atr{male only events} \atr{part of $\geq$ 8
                olympics} \obj{..xx.}{Aq. – Marathon Swimming}
              \obj{..xx.}{Surfing} \obj{..xx.}{Triathlon}
            \end{cxt}
            
            \vspace{1ex}
          \end{minipage}
          \caption{}
        \end{figure}
    
        \
        \\
        The \emph{collaboration strategy} returns the context of
        counterexamples provided by $E_1$ to the attribute
        exploration.

    %
    %
    %

        \
        \\
        \noindent\textbf{Question 2}:
        $\emptyset \rightarrow \{\text{part of }\geq \text{ 8
          olympics}\} \ ?$
    
    \noindent\textbf{Interaction with $E_1$}: \newline
    This is {unknown} to $E_1$ and she responds with
    $(\text{unknown }, \{\text{part of }\geq \text{ 8 olympics}\} )$.
    
    \noindent\textbf{Interaction with $E_2$}: \newline
    The expert knows this to be false and responds with
    $(\text{false }, \context_{Q2})$.
    
    \
    \\
    \noindent\textbf{Question 3}:
    $\{\geq \text{ 10 events}\} \rightarrow \{\geq \text{ 5 events},
    \text{ female only events}, \text{ male only events},\\ \text{part
      of } \geq \text{ 8 olympics}\}\ ?$

    \noindent\textbf{Interaction with $E_1$}: \newline
    This is {unknown} to $E_1$ and she responds with \newline
    $(\text{unknown }, \{\text{female only events}, \text{male only
      events}\})$.
    
    \noindent\textbf{Interaction with $E_2$}: \newline
    This is {unknown} to $E_2$ and she responds with \newline
    $(\text{unknown}, \{\text{female only events} \geq \text{5
      events},\text{male only events},\text{part of} \geq \text{8
      olympics}\})$.
    
    \noindent\textbf{Interaction with $E_3$}: \newline
    This is {unknown} to $E_3$ and she responds with \newline
    $(\text{unknown }, \{\text{female only events}, \text{male only
      events}, \text{part of } \geq \text{ 8 olympics}\})$.

    \
    \\
    \noindent Here the \emph{iterative strategy} collected the set of
    attributes known to follow and replies with
    $(\text{unknown }, \{\text{female only events}, \text{male only
      events}\})$.  The attribute exploration introduces two
    fictitious counterexamples, one for each of the unknown
    attributes.
    
    \
    \\
    \noindent\textbf{Question 4}:
    $\{\geq \text{ 10 events}\} \rightarrow \{\geq \text{ 5 events},
    \text{ part of } \geq \text{ 8 olympics}\}\ ?$
    
    \noindent\textbf{Interaction with $E_1$}: 
    The expert knows this to be true and responds with
    $(\text{true }, \emptyset)$.

    \
    \\
    \\
    \noindent\textbf{Question 5}:
    $\{\geq \text{ 5 events}\} \rightarrow \{\text{male only events},
    \text{ part of } \geq \text{ 8 olympics}\}\ ?$
    
    \noindent\textbf{Interaction with $E_1$}: \newline
    This is {unknown} to $E_1$ and she responds with \newline
    $(\text{unknown }, \{\text{male only events}, \text{ part of }
    \geq \text{ 8 olympics}\})$.
    
    \noindent\textbf{Interaction with $E_2$}: \newline
    This is {unknown} to $E_2$ and she responds with
    $(\text{unknown }, \{\text{part of } \geq \text{ 8 olympics}\})$.
    
    \noindent\textbf{Interaction with $E_3$}: \newline
    The expert knows this to be false and responds with
    $(\text{false }, \context_{Q5})$.

    \
    \\
    \noindent\textbf{Question 6}:
    $\{\geq \text{ 5 events}\} \rightarrow \{\text{male only
      events}\}\ ?$
    
    \noindent\textbf{Interaction with $E_1$}: \newline
    This is {unknown} to $E_1$ and she responds with
    $(\text{unknown }, \{\text{male only events}\})$.
    
    \noindent\textbf{Interaction with $E_2$}: \newline
    The expert knows this to be true and responds with
    $(\text{true }, \emptyset)$.

    \
    \\
    \noindent\textbf{Question 7}:
    $\{\geq \text{ 5 events}, \geq \text{ 10 events}, \text{male only
      events}, \text{part of } \geq \text{ 8 olympics}\} \rightarrow
    \{\text{female only events}\}\ ?$
    
    \noindent\textbf{Interaction with $E_1$}: \newline
    This is {unknown} to $E_1$ and she responds with
    $(\text{unknown }, \{\text{female only events}\})$.
    
    \noindent\textbf{Interaction with $E_2$}: \newline
    This is {unknown} to $E_2$ and she responds with
    $(\text{unknown }, \{\text{female only events}\})$.
    
    \noindent\textbf{Interaction with $E_3$}: \newline
    This is {unknown} to $E_3$ and she responds with
    $(\text{unknown }, \{\text{female only events}\})$.
    
    \
    \\
    \noindent Again the \emph{iterative strategy} collected the set of
    attributes known to follow and replies with
    $(\text{unknown }, \{\text{female only events}\})$.  The
    exploration algorithm introduces a fictitious counterexample for
    the attribute.

    \begin{figure}[t]
      \setlength{\belowcaptionskip}{-45pt}
      \begin{minipage}{0.49\textwidth}
        \centering \textbf{Counterexample Question 5} \vspace{1ex}
            
            \begin{cxt}
              \cxtName{$\context_{Q5}$} \cxtNichtKreuz{} \atr{$\geq$ 5
                events} \atr{$\geq$ 10 events} \atr{female only
                events} \atr{male only events} \atr{part of $\geq$ 8
                olympics} \obj{x.xx.}{Karate – Kumite}
              \obj{x.xx.}{Taekwondo}
            \end{cxt}
            
            \vspace{1ex}
          \end{minipage}
          \begin{minipage}{0.49\textwidth}
            \centering \textbf{Counterexample Question 8} \vspace{1ex}
            
            \begin{cxt}
              \cxtName{$\context_{Q8}$} \cxtNichtKreuz{} \atr{$\geq$ 5
                events} \atr{$\geq$ 10 events} \atr{female only
                events} \atr{male only events} \atr{part of $\geq$ 8
                olympics} \obj{..x.x}{Aq. – Artistic Swimming}
            \end{cxt}
            
            \vspace{1ex}
          \end{minipage}
          \caption{}
        \end{figure}
    
        \
        \\
        \noindent\textbf{Question 8}:
        $\{\text{female only events}\} \rightarrow \{\text{male only
          events}\}\ ?$
    
    \noindent\textbf{Interaction with $E_1$}: \newline
    This is {unknown} to $E_1$ and she responds with
    $(\text{unknown }, \{\text{male only events}\})$.
    
    \noindent\textbf{Interaction with $E_2$}: \newline
    The expert knows this to be false and responds with
    $(\text{false }, \context_{Q8})$.
    
    %
    %

    \noindent\textbf{Result of the collaborative attribute
      exploration:}
    
    The result of the exploration is the set of accepted implications
    $\mathcal{L}_{result}$ and the incomplete context of
    counterexamples $\context_{result}$ which contains the set of
    fictitious counterexamples
    \vspace{-1ex}
    \begin{align*}
      G^*=\{
      &g^?_{\{\geq\text{ 5 events},\text{male only events}, \text{ part of } \geq \text{ 8 olympics}, \geq \text{ 10 events}\}\not\rightarrow\{\text{female only events}\}},\\
      &g^?_{\{\geq \text{ 10 events}\}\not\rightarrow\{\text{female only events}\}},\\
      &g^?_{\{\geq \text{ 10 events}\}\not\rightarrow\{\text{male only events}\}}
        \}.
    \end{align*}
    %
    \vspace{-6ex}
    \begin{figure}
      \begin{minipage}{\textwidth}
        \centering \textbf{Result of the Attribute Exploration}
        \vspace{1ex}
            
            \begin{cxt}
              \cxtName{$\context_{result}$} \cxtNichtKreuz{}
              \atr{$\geq$ 5 events} \atr{$\geq$ 10 events} \atr{female
                only events} \atr{male only events} \atr{part of
                $\geq$ 8 olympics} \obj{..x.x}{Aquatics – Artistic
                Swimming} \obj{x.xxx}{Aquatics – Diving}
              \obj{..xx.}{Aquatics – Marathon Swimming}
              \obj{..xxx}{Aquatics – Water Polo} \obj{..xxx}{Cycling –
                Road} \obj{....x}{Equestrian – Dressage}
              \obj{....x}{Equestrian – Eventing}
              \obj{....x}{Equestrian – Jumping} \obj{..xxx}{Football}
              \obj{..xxx}{Hockey} \obj{x.xx.}{Karate – Kumite}
              \obj{..xxx}{Modern Pentathlon} \obj{..xx.}{Surfing}
              \obj{x.xx.}{Taekwondo} \obj{..xx.}{Triathlon}
              \obj{x..xx}{Wrestling – Greco Roman}
              \obj{xx.xx}{$g^?_{\{\geq\text{ 5 events},\text{male only
                    events}, \text{ part of } \geq \text{ 8 olympics},
                  \geq \text{ 10
                    events}\}\not\rightarrow\{\text{female only
                    events}\}}$}
              \obj{?x.??}{$g^?_{\{\geq \text{ 10
                    events}\}\not\rightarrow\{\text{female only
                    events}\}}$}
              \obj{?x?.?}{$g^?_{\{\geq \text{ 10
                    events}\}\not\rightarrow\{\text{male only
                    events}\}}$}
            \end{cxt}
            
            \vspace{1ex}
            \begin{align*}
              \mathcal{L}_{result}= \{&\{\geq \text{ 10 events}\} \rightarrow \{\geq \text{ 5 events}, \text{ part of } \geq \text{ 8 olympics}\},\\ &\{\geq \text{ 5 events}\} \rightarrow \{\text{male only events}\}\}
            \end{align*}
          \end{minipage}
          \caption{}
        \end{figure}

        \vspace{-9ex}
        We can see that each of the experts provided at least one
        example that none of the other experts could have provided to the
        context of counterexamples $\context_{result}$, namely \emph{Aquatics - Artistic Swimming},
        \emph{Cycling - Road} and \emph{Taekwondo}.
        Further the set $\mathcal{L}_{result}$ of accepted
        implications is larger than any of the individuals experts
        known implication sets. 
      \end{example}
    \end{document}